\documentclass{article}



\usepackage[preprint, nonatbib]{neurips_2024}



\usepackage[utf8]{inputenc} 
\usepackage[T1]{fontenc}    
\usepackage{hyperref}       
\usepackage{url}            
\usepackage{booktabs}       
\usepackage{amsfonts}       
\usepackage{nicefrac}       
\usepackage{microtype}      
\usepackage{xcolor}         

\usepackage{graphicx}
\usepackage{subcaption}

\usepackage{amsmath}
\usepackage{amssymb}
\usepackage{mathtools}
\usepackage{amsthm}

\usepackage{algorithmic}
\usepackage{algorithm}

\usepackage{multirow}
\usepackage{bm}
\newtheorem{lemma}{Lemma}

\usepackage{wrapfig}

\title{Invisible Backdoor Attacks on Diffusion Models}

%

\author{%
   Sen Li$^1$,\hspace{0.5em} Junchi Ma$^2$,\hspace{0.5em} Minhao Cheng$^3$ \\
   $^1$The Hong Kong University of Science and Technology, Hong Kong, China \\
   $^2$The University of British Columbia, Canada \\
   $^3$Penn State University, USA \\
   \texttt{slien@connect.ust.hk}, \texttt{junchima@student.ubc.ca}, \texttt{mmc7149@psu.edu}
}

\begin{document}
\maketitle

\begin{abstract}
  In recent years, diffusion models have achieved remarkable success in the realm of high-quality image generation, garnering increased attention. This surge in interest is paralleled by a growing concern over the security threats associated with diffusion models, largely attributed to their susceptibility to malicious exploitation. Notably, recent research has brought to light the vulnerability of diffusion models to backdoor attacks, enabling the generation of specific target images through corresponding triggers. However, prevailing backdoor attack methods rely on manually crafted trigger generation functions, often manifesting as discernible patterns incorporated into input noise, thus rendering them susceptible to human detection. In this paper, we present an innovative and versatile optimization framework designed to acquire invisible triggers, enhancing the stealthiness and resilience of inserted backdoors. Our proposed framework is applicable to both unconditional and conditional diffusion models, and notably, we are the pioneers in demonstrating the backdooring of diffusion models within the context of text-guided image editing and inpainting pipelines. Moreover, we also show that the backdoors in the conditional generation can be directly applied to model watermarking for model ownership verification, which further boosts the significance of the proposed framework.
  Extensive experiments on various commonly used samplers and datasets verify the efficacy and stealthiness of the proposed framework. Our code is publicly available at \url{https://github.com/invisibleTriggerDiffusion/invisible_triggers_for_diffusion}.
\end{abstract}

\section{Introduction}
\label{sec:intro}
\vspace{-0.7em}
Recently, diffusion models have showcased exceptional performance in generating high-quality and diverse image~\cite{ho2020denoising, nichol2021improved, dhariwal2021diffusion, ho2022classifier, nichol2021glide}. Based on diffusion models, many popular applications have been developed including GLIDE~\cite{nichol2021glide}, Imagen~\cite{saharia2022photorealistic}, and Stable Diffusion~\cite{rombach2022high}. These applications serve as powerful tools for unleashing creativity in content generation.
By slowly adding random noise to data, diffusion models learn to reverse the diffusion process to construct desired data samples from the noise.  While this approach has proven excellent in creative content generation, it concurrently introduces novel security challenges that warrant careful consideration.

Instances of backdoor attacks on diffusion models, as explored in previous studies~\cite{chou2023backdoor, chen2023trojdiff, chou2023villandiffusion, struppek2022rickrolling}, have illuminated the potential threats associated with manipulating the diffusion process. However, prevailing approaches either incorporate conspicuous image triggers and additional text into the input noise or prompt to backdoor diffusion models~\cite{chou2023backdoor, chen2023trojdiff, chou2023villandiffusion}. Alternatively, some methods focus solely on substituting input text characters with non-Latin characters, impacting text encoders rather than diffusion models~\cite{struppek2022rickrolling}.
While these strategies have demonstrated commendable success rates, the visibility of triggers in~\cite{chou2023backdoor} and~\cite{chen2023trojdiff} renders them susceptible to detection through human inspection. Notably, none of the prior works, to the best of our knowledge, have ventured into the realm of invisible image triggers for backdooring diffusion models. The adaptability of invisible triggers to different input conditions allows them to seamlessly blend into the background noise, enhancing their robustness against diverse inputs.
Moreover, the subtlety of invisible triggers contributes to a more sustained and persistent backdoor presence. This subtlety facilitates prolonged content manipulation, presenting a challenge for defensive mechanisms to promptly identify and effectively mitigate the threat. The nuanced nature of invisible triggers, therefore, adds an extra layer of complexity and resilience to backdoor attacks on diffusion models.

\begin{figure}[htbp]
    \centering
    \includegraphics[width=1\linewidth]{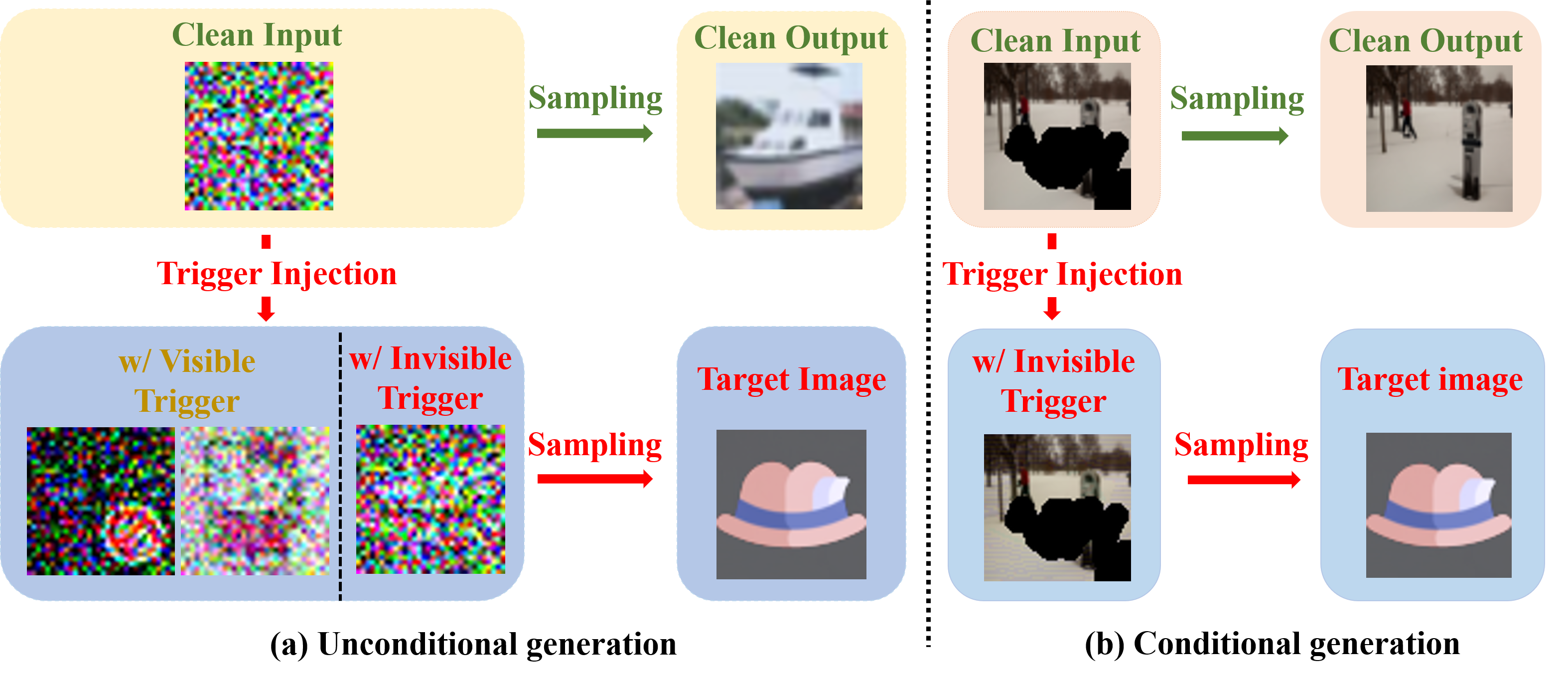}
      \vspace{-1em}
    \caption{Illustration of our proposed invisible triggers, and visible triggers used in~\cite{chou2023backdoor, chou2023villandiffusion, chen2023trojdiff}. }
  
    \label{fig:trigger}
        \vspace{-1em}
\end{figure}

In this paper, illustrated in Figure~\ref{fig:trigger}, we delve into the realm of backdoor attacks featuring invisible image triggers applicable to both unconditional and conditional diffusion models. Our approach introduces a pioneering and comprehensive framework, formulated through bi-level optimization, designed to learn input-aware invisible triggers. The inner optimization phase involves optimizing a trigger generator, given a fixed diffusion model, to enable the generation of target images while maintaining the imperceptibility of the trigger. Simultaneously, the outer optimization phase, given a fixed trigger generator, optimizes the diffusion model to exhibit strong performance on both clean and poisoned data. This framework seamlessly integrates backdooring for both conditional and unconditional diffusion models within a unified optimization problem.

Specifically, when the prior is a random Gaussian noise for the unconditional diffusion model, we train the diffusion model to learn different distribution mappings so that the diffusion model would generate the target image given an initial noise that contains the trigger. 
Importantly, our framework can insert multiple trigger-target pairs and generate instance-adaptive initial noise rather than applying a uniform perturbation. 
For conditional diffusion models, we employ a simple neural network to generate instance-adaptive and invisible triggers, effectively incorporating backdoors into additional priors. This enables the conditional diffusion model to generate the target image regardless of the given text, representing a novel contribution compared to previous works focusing on trigger design in the text domain.
The invisibility of our proposed backdoor trigger also provides us a seamlessly model watermarking method for model ownership verification. If the suspected model is derived from the watermarked model, once the trigger is being activated, the model would generate a designated target image regardless of any prompt/instruction given. 
Unlike the previous watermark method using prompt word as trigger, our proposed watermark is robust against different kinds of image editing and instructions.

In summary, our work marks a significant milestone as we introduce a novel and versatile optimization framework to inject input-aware invisible image triggers into both unconditional and conditional diffusion models. This enhancement renders the injected backdoor more covert and robust. 
\vspace{-1em}

\section{Related work}
\label{sec:related}
\vspace{-0.7em}

\paragraph{Diffusion models} As a new family of powerful generative models, diffusion models could achieve superb performance on high-quality image synthesis~\cite{ho2020denoising, nichol2021glide, saharia2022photorealistic, rombach2022high}. They have shown impressive results on various tasks, such as class-to-image generation~\cite{dhariwal2021diffusion, ho2022classifier}, text-to-image generation~\cite{saharia2022photorealistic}, image-to-image translation~\cite{meng2021sdedit}, text-guided image editing/inpainting~\cite{nichol2021glide, rombach2022high}, and so on. A more detailed introduction to diffusion models can be found in Appendix~\ref{sec:diffusion}. With the powerful capability, the research community has started to focus on the potential security issues that diffusion models may introduce. In this paper, we propose a strong attack framework which can make diffusion models perform maliciously when some invisible patterns are injected into the input, revealing the potential severe security risk that previous works did not cover.

\vspace{-1em}
\paragraph{Backdoor attacks on diffusion models} Recently, diffusion models have been shown to be vulnerable to backdoor attacks~\cite{chou2023backdoor, chen2023trojdiff, chou2023villandiffusion, struppek2022rickrolling}. Struppek et al.~\cite{struppek2022rickrolling} proposed to backdoor the text encoder only in text-to-image diffusion models by replacing text characters with non-Latin characters, showing the potential threat in text-to-image generation. However, their method did not consider the backdoor threat on the diffusion models, limiting the practical use of the method. Chou et al.~\cite{chou2023backdoor} and Chen et al.~\cite{chen2023trojdiff} proposed to backdoor diffusion models in different ways. Their methods focus on backdooring unconditional diffusion models, which may not be applicable to backdoor diffusion models in conditional case. Very recently, Chou et al.~\cite{chou2023villandiffusion} proposed a unified framework on backdoor attack for both unconditional diffusion models and text-to-image diffusion models. 

To the best of our knowledge, all previous works are working on visible trigger in the diffusion model which could be easily detected and ruled out. In this paper, we propose a novel and general framework to inject invisible triggers into both unconditional and conditional diffusion models.
Although there are previous works utilizing bi-level optimization to generate invisible triggers in classification models~\cite{doan2021lira, doan2021backdoor}, the context of learning an invisible backdoor trigger in diffusion models significantly differs. We have included an in-depth discussion in Appendix~\ref{sec:invisible_cls}. Our work further distinguishes itself by formulating a general loss function capable of accommodating various efficient samplers, such as DDIM~\cite{song2020denoising} and DPMSolver~\cite{lu2022dpm}, as opposed to being limited to a single sampler as in~\cite{chou2023backdoor}.

\vspace{-1em}
\section{Methodology}
\label{sec:method}
\vspace{-0.7em}

\subsection{Threat Model}
\vspace{-0.7em}
We adopt a similar threat model as prior research~\cite{chou2023backdoor, chou2023villandiffusion}, where the attack involves two distinct parties. An \textit{attacker} is responsible for injecting backdoors into diffusion models, subsequently releasing the backdoored models. Meanwhile, a \textit{user} downloads these pre-trained models from the web for practical use and have full access to the backdoored models. Additionally, the \textit{user} possesses a subset of clean data for evaluating the models' performance.
Within this model, the \textit{attacker} retains control over the training procedure of the diffusion, encompassing both initial training and fine-tuning. The \textit{attacker} is also granted the capability to modify the training datasets, allowing the addition of supplementary examples. Throughout the training process, the \textit{attacker} endeavors to release backdoored models that exhibit designated behavior (e.g. generating specific image) when the input is injected with the trigger, while behaving normally on inputs without the trigger. Therefore, the \textit{attacker}'s dual objectives include achieving high specificity, ensuring the backdoored models perform maliciously by generating target images when triggered, and maintaining high utility, signifying performance similar to clean models in generating high-quality images consistent with the training dataset distribution. Since current backdoor triggers all make poisoned images visually different from clean images and universal across all the inputs, they can be detected easily by universal perturbation-based detection and human inspection. In this paper, we aims to learn input-aware invisible triggers to make the injected backdoor stronger and more stealthy.
\vspace{-1em}
\subsection{Optimization framework for learnable invisible trigger}

\vspace{-0.7em}

In this paper, our goal is to inject invisible triggers into both unconditional and conditional diffusion models, enhancing the backdoor's stealthiness and effectiveness. To address this, we formulate the task as a bi-level optimization problem, which learns the invisible trigger to be inserted given different priors. For the purpose of generality, let $g$ be a trigger generator that generates invisible triggers given different priors $P$, $\mathcal{A}$ be the trigger insertion function which inserts the invisible trigger generated by $g$ into $P$, and $\bm{y}$ be the target image that the backdoored diffusion model will generate when the trigger is activated. Let $\bm{\epsilon}_\theta$ be the diffusion model which takes the prior $P$ as input to predict the noise, and $\mathcal{S}$ denote the whole sampling process of diffusion models which takes diffusion model $\bm{\epsilon}_\theta$ and $P$ as input to generate real data by iterative sampling. We first show the general optimization framework for different priors (i.e., unconditional and conditional generation). Then we will elaborate on the specific parameterization of $g$ and $\mathcal{A}$ for different priors $P$ in detail.

We formulate learning invisible backdoor for difussion model into a bi-level optimization problem. In the inner optimization, we optimize the trigger generator $g$ given fixed $\bm\epsilon_\theta$ to generate the target image $\bm{y}$. The MSE(mean squared error) is used as loss function to train $g$ as:
\begin{align}
    L_{inner}(\bm{\epsilon}_\theta, P, g(P)) = \left\|\mathcal{S} \Bigl(\bm{\epsilon}_\theta, \mathcal{A} (P, g(P)) \Bigl) - \bm{y} \right\|^2.
\end{align}
At the same time, to ensure the invisibility of the generated trigger, the generated trigger is bounded by $\ell_p$ norm ($\ell_\infty$ used in this paper), where we use PGD~\cite{madry2017towards} optimization to force the constraint. 
To optimize $g$, all intermediate results during sampling have to be stored to compute the gradient with respect to $g$. It thus becomes infeasible to sample many steps like the original DDPM~\cite{ho2020denoising} sampling. Alternatively, we use DDIM~\cite{song2020denoising} to perform the accelerated sampling process to make the optimization tractable. 

For the outer optimization of bi-level optimization, we optimize the diffusion model $\bm{\epsilon}_\theta$ to correctly predict the noise for both clean data and poisoned data (i.e., backdoor diffusion process).
Let $L_{outer}\left(\bm{\epsilon}_\theta, \bm{x}_0, P, g(P), \bm{y}, t\right)$ denote the loss function for the outer optimization,
where $L_{outer}$ is used to enforce the backdoor trigger's efficacy and model's utility. We defer the detailed $L_{outer}$ design in the following section based on different priors in condition and uncondition case.

Therefore, the whole bi-level optimization framework for input-aware invisible trigger can be formulated as
\begin{equation}
\begin{aligned}
\label{eq:framework}
    \min_\theta \; &L_{outer}\left(\bm{\epsilon}_\theta, \bm{x}_0, P, g^*_\theta(P), \bm{y}, t\right) \\
    \text{s.t.} \quad &g^*_\theta = \mathop{\arg \min}_g \left\|\mathcal{S} \Bigl(\bm{\epsilon}_\theta, \mathcal{A} (P, g(P) ) \Bigl) - \bm{y} \right\|^2, \|g(P)\|_\infty \leq C
\end{aligned}   
\end{equation}
\vspace{-0.7em}

where $\|\cdot\|_\infty$ denotes the $\ell_\infty$ norm, and $C$ is the norm bound of the trigger. Given the general framework in Equation~\ref{eq:framework}, we now describe the specific parameterization of the loss used under both unconditioned and conditioned scenarios.

\vspace{-1em}
\subsubsection{Backdooring unconditional diffusion models} 
\vspace{-0.7em}
When we backdoor unconditional diffusion model, the prior would be regarded as random Gaussian noise $P=\bm{\epsilon}, \bm{\epsilon} \sim \mathcal{N}(\bm{0}, \bm{I})$. 
Specifically, different from visible trigger used in prior works~\cite{chou2023backdoor,chou2023villandiffusion,chen2023trojdiff} on backdooring unconditional diffusion model, we optimize $g$ as a universal trigger generator for any random noise sampled from $\mathcal{N}(\bm{0}, \bm{I})$. That is, $g(\bm{\epsilon})=\bm{\delta}, \forall \bm{\epsilon} \sim \mathcal{N}(\bm{0}, \bm{I})$. The trigger injection function $\mathcal{A}$ then could be defined as $\mathcal{A}(P, g(P)) = \mathcal{A}(\bm{\epsilon}, \bm{\delta})=\bm{\delta} + \bm{\epsilon}$. In other words, we aim to make the diffusion model generate a specific image (target image) given any poisoned noise sampled from $\mathcal{N}(\bm{\delta}, \bm{I})$. By generating different $\bm{\delta}$, our proposed method could insert multiple invisible universal trigger-target pairs simultaneously. Furthermore, to make the trigger more invisible and versatile, instead of keeping $\bm{\delta}$ universal, we optimize $g$ to be a trigger distribution $\mathcal{N}(\bm{\delta}, \bm{I})$ so that any noise draw from the trigger distribution will make the diffusion model generate target image. Formally, we make $g(\bm{\epsilon})=\bm{\delta}'$ and $\mathcal{A}(P, g(P)) = \bm{\delta}' + \bm{\epsilon}, \bm{\delta}' \sim \mathcal{N}(\bm{\delta}, \bm{I}), \bm{\epsilon} \sim \mathcal{N}(\bm{0}, \bm{I})$.
Keeping the trigger invisible, our trigger generator is dynamic and sample-specific, which is able to bypass the current universal perturbation-based detection and defense.


Given the above parameterization of $g$ and $\mathcal{A}$, we define the loss function $L_{outer}$ based on DDIM~\cite{song2020denoising}
where we aim to create a secrect mapping for poisoned data between target image $y$ and poisoned distribution $\mathcal{N}(\bm{\delta}, \bm{I})$ or $\mathcal{N}(\bm{\delta}', \bm{I})$. 
Let $\bm{x}'_0 = \bm{y}$ (the target image distribution). $\bm{x}'_1, \bm{x}'_2, \cdots, \bm{x}'_T \in \mathbb{R}^d$ is then generated by gradually adding noise into $\bm{x}'_0$ where noise schedule is also controlled by $\beta_t$. The backdoored forward process is defined as:
\vspace{-1em}
\begin{align}
\label{eq:ddim_backdoor}
q_\sigma (\bm{x}'_{1:T}|\bm{x}'_0) &:= q_\sigma(\bm{x}'_T|\bm{x}'_0) \prod_{t=2}^T q_\sigma (\bm{x}'_{t-1}|\bm{x}'_t,\bm{x}'_0),
\end{align}
where $q_\sigma (\bm{x}'_T|\bm{x}'_0) = \mathcal{N}(\sqrt{\bar{\alpha}_T} \bm{x}'_0, (1-\bar{\alpha}_T) \bm{I})$, and $q_\sigma (\bm{x}'_{t-1}|\bm{x}'_t, \bm{x}'_0) = \mathcal{N}(\sqrt{\bar{\alpha}_{t-1}} \bm{x}'_0 + (1-\sqrt{\bar{\alpha}_{t-1}}) \bm{\delta} + \sqrt{1-\bar{\alpha}_{t-1}-\sigma_t^2} \cdot \frac{\bm{x}'_t - \sqrt{\bar{\alpha}_t} \bm{x}'_0 - (1-\sqrt{\bar{\alpha}_t}) \bm{\delta}}{\sqrt{1-\bar{\alpha}_t}}, \sigma_t^2 \bm{I})$.

The mean function in $q_\sigma (\bm{x}'_{t-1}|\bm{x}'_t, \bm{x}'_0)$ is chosen to ensure that $q_\sigma(\bm{x}'_t|\bm{x}'_0) := \mathcal{N}(\bm{x}'_t; \sqrt{\bar{\alpha}_t} \bm{x}'_0 + (1-\sqrt{\bar{\alpha}_t}) \bm{\delta}, (1-\bar{\alpha}_t) \bm{I})$ (See Appendix~\ref{sec:loss} for derivation). 
Using DDIM sampling process, we can directly set $\sigma_t=0$ to further simplify the derivation. 
the  loss function based on the minimization of KL divergence between parameterized $p_\theta(\bm{x}'_{t-1}|\bm{x}'_t)$ and $q_\sigma (\bm{x}'_{t-1}|\bm{x}'_t, \bm{x}'_0)$ can be written as
\vspace{-0.7em}
\begin{align}
\label{eq:loss}
    \mathop\mathbb{E}_{\bm{x}'_0, \bm{\epsilon}, t} \bigg[\|\bm{\epsilon} + \zeta_t\bm{\delta} 
    -\bm{\epsilon}_\theta(\bm{x}'_t(\bm{x}'_0, \bm{\delta}, \bm{\epsilon}), t)\|^2\bigg],
\end{align}
\vspace{-0.7em}

where $\zeta_t=\frac{\sqrt{\bar{\alpha}_{t-1}} - \sqrt{\bar{\alpha}_t}}{\sqrt{\bar{\alpha}_{t-1}} \sqrt{1-\bar{\alpha}_t} - \sqrt{\bar{\alpha}_t} \sqrt{1-\bar{\alpha}_{t-1}}}$, $\bm{x}'_t(\bm{x}'_0, \bm{\delta}, \bm{\epsilon})=\sqrt{\bar{\alpha}_t} \bm{x}'_0 + (1-\sqrt{\bar{\alpha}_t}) \bm{\delta} + \sqrt{1-\bar{\alpha}_t} \bm{\epsilon}$. We defer the full derivation in Appendix~\ref{sec:loss}. 
To let the diffusion model learn different distribution mapping, we construct the poisoned dataset as $\mathcal{D}=\{\mathcal{D}_c, \mathcal{D}_p\}$ where $\mathcal{D}_c$ denotes the clean data and $\mathcal{D}_p$ is the poisoned data. 
Now we can combine the loss function for backdooring diffusion process with loss function for clean diffusion process to obtain $L_{outer}$ for the outer optimization. The training algorithm under the bi-level optimization framework is shown in Algorithm~\ref{alg:unconditional}. 
During training, for poisoned sample and clean sample, we design the following loss function:
\vspace{-0.7em}
\begin{equation}
\begin{aligned}
    L_{outer}\left(\bm{\epsilon}_\theta, \bm{x}_0, \bm{\epsilon}, \bm{\delta}, \bm{y}, t\right) = 
    \begin{cases}
        &\|\bm{\epsilon} - \bm{\epsilon}_\theta(\sqrt{\bar{\alpha}_t} \bm{x}_0 + \sqrt{1 - \bar{\alpha}_t} \bm{\epsilon}, t)\|^2, \; \; \text{if} \; \bm{x}_0 \in \mathcal{D}_c, \\
        &\|\bm{\epsilon} +\zeta_t\bm{\delta}
        -\bm{\epsilon}_\theta(\bm{x}'_t(\bm{y}, \bm{\delta}, \bm{\epsilon}), t)\|^2, \; \;   \text{if} \;  \bm{x}_0 \in \mathcal{D}_p.
    \end{cases}
\end{aligned}    
\end{equation}
Moreover, rather than being limited to only use a specific sampler~\cite{chou2023backdoor}, our proposed framework could choose a wide range of samplers such as DDIM~\cite{song2020denoising}, DPMSolver~\cite{lu2022dpm} etc.



\vspace{-1em}
\subsubsection{Backdooring conditional diffusion models}
\vspace{-0.7em}
\label{sec:conditional}
Different from the above unconditional diffusion models in which the prior is random noise, conditional diffusion models can have various priors, such as texts, images, masked images, and even sketches~\cite{ramesh2021zero, nichol2021glide, meng2021sdedit, saharia2022photorealistic, rombach2022high}. Since the prior are all natural images and texts, it is thus crucial to make the trigger invisible where previously used visible triggers~\cite{chou2023backdoor, chen2023trojdiff, chou2023villandiffusion} can be detected without any effort.
For simplicity, we formulate how to learn invisible triggers for conditional diffusion model in the text-guided image editing pipeline used in~\cite{nichol2021glide}.  For text-guided image editing, the priors consist of the masked image to be edited, a mask marking editing regions, and text~\cite{nichol2021glide, rombach2022high}.  Unlike previous works~\cite{struppek2022rickrolling, chou2023villandiffusion} that insert the backdoor into text representation, to best of our knowledge, not only are we the first to propose a general framework to learn invisible triggers but also the first to show how to backdoor text-guided image editing/inpainting pipeline. 

Let the masked image be $\tilde{\bm{x}}=\bm{x}_0 \odot \bm{M}$ where $\bm{M}$ is the binary mask. Let $c$ be the text instruction for editing. Then the priors can be written as $P=\{\tilde{\bm{x}}, \bm{M}, c\}$. The aim of invisible triggers in this pipeline is to only insert imperceptible perturbation into masked natural images $\tilde{\bm{x}}$ to backdoor diffusion models. 
In this setting, we have to ensure that there is no perturbation/trigger in the masked region, or the inserted trigger can be immediately detected since the pixel values must be 0 in the masked region. 
To this end, we parameterize the trigger generator $g$ with a simple neural network to learn input-aware triggers given masked image $\tilde{\bm{x}}$, mask $\bm{M}$, and target image $\bm{y}$. Let $\bm\delta^M_{\bm{x}_0} = g(\tilde{\bm{x}}, \bm{M}, \bm{y})$ be the generated input-aware triggers. To ensure there is no perturbation in the masked region, we directly constrain the generated trigger to have zero value on the masked region. 

We consider two kinds of masks, rectangular masks and free-form masks proposed in~\cite{yu2019free}, which can mimic the user-specified masks used in real-world applications. Note that due to the existence of additional priors, the sampling process is different now. With the additional priors, classifier-free guidance is used to generate images conditioned on additional priors~\cite{ho2022classifier, nichol2021glide}. The predicted noise is computed as $(1-\gamma)\bm\epsilon_\theta(\bm{x}_t, t, \tilde{\bm{x}}, \bm{M}, \emptyset) + \gamma\bm\epsilon_\theta(\bm{x}_t, t, \tilde{\bm{x}}, \bm{M}, c)$ where $\gamma$ is a hyperparameter that controls the strength of guidance~\cite{ho2022classifier}. 


As defined in the threat model, the attacker aims to generate the target image $\bm{y}$ when the backdoor trigger is activated. This asks the conditional diffusion model should output the same target image regardless of any given text. In other words, the diffusion models should predict the same noise whenever there are triggers in the masked image. By setting the text to be a empty string, we mimic the aforementioned procedure as $\bm\epsilon_\theta(\bm{x}_t, t, \tilde{\bm{x}}+\bm\delta^M_{\bm{x}_0}, \bm{M}, c)=\bm\epsilon_\theta(\bm{x}_t, t, \tilde{\bm{x}}+\bm\delta^M_{\bm{x}_0}, \bm{M}, \emptyset)$.  Therefore, in the inner optimization, we perform the sampling process independently with $c$ or $\emptyset$, similar to the above unconditional sampling instead of classifier-free guidance sampling, to generate target image. 



We summarize the training algorithm to insert backdoor in conditional diffusion models in Algorithm~\ref{alg:conditional}.
Given $\bm{x}_0 \sim \{\mathcal{D}_c, \mathcal{D}_p\}$, for the clean training (i.e., $\bm{x}_0 \in \mathcal{D}_c$), we add noise to $\bm{x}_0$ and optimize $\bm{\epsilon}_\theta$ which takes noisy $\bm{x}_0, \bm{x}_0 \odot \bm{M}, \bm{M}$ and $c$ as input to predict noise. For the backdoor training (i.e., $\bm{x}_0 \in \mathcal{D}_p$), we firstly sample clean data $\bm{x}_c \sim \mathcal{D}_c$, compute the corresponding masked version as $(\bm{x}_c \odot \bm{M})$, and generate the injected trigger $\bm{\delta}^M_{\bm{x}_c}$ for the masked image. Then we add noise to target image $\bm{y}$, and optimize $\bm{\epsilon}_\theta$ which takes noisy $\bm{y}, (\bm{x}_c \odot \bm{M}+\delta^M_{\bm{x}_c}), \bm{M}$, and $c$ as input, to predict the noise. Hence $L_{outer}$ for the outer optimization can be written as
\vspace{-0.5em}
\begin{equation}
\begin{aligned}
    L_{outer}\left(\bm{\epsilon}_\theta, \bm{x}_0, \bm{M}, c, \bm\delta^M_{\bm{x}_c}, \bm{y}, t\right) =
    \begin{cases}
        \|\bm{\epsilon} - \bm{\epsilon}_\theta(\bm{x}_t, t, \bm{\tilde{x}}_0, \bm{M}, c)\|^2, 
        \text{if} \; \bm{x}_0 \in \mathcal{D}_c, \\
        \|\bm{\epsilon} -\bm{\epsilon}_\theta(\bm{y}_t, t, \bm{\tilde{x}}_c
        +\bm\delta^M_{\bm{x}_c}, \bm{M}, c)\|^2, \text{if} \; \bm{x}_0 \in \mathcal{D}_p,
    \end{cases}
\end{aligned}    
\end{equation}
where $\bm{x}_t=\sqrt{\bar{\alpha}_t} \bm{x}_0 + \sqrt{1 - \bar{\alpha}_t} \bm{\epsilon}, \bm{y}_t=\sqrt{\bar{\alpha}_t} \bm{y} + \sqrt{1 - \bar{\alpha}_t} \bm{\epsilon}$.  
\vspace{-0.7em}

\paragraph{Using invisible trigger as model watermarking} 
Because of the invisibility and robustness of our framework, we show invisible backdoors in conditional generation could be utilized into model watermarking for model ownership verification. In this setting, a model owner aims to insert invisible backdoors into the conditional diffusion model as watermarks using our proposed framework for model copyright protection. Given any inspected model, investigators aims to verify if the inspected model is derived from the watermarked model, where investigators only have the black-box access to the inspected model without its internal information. If the inspected model is derived from the watermarked model, then the output would be the target image given any input with the trigger; otherwise the output won't share much similarity with the target image. Hence the investigators are able to query the inspected model with input images with the triggers and then compute the MSE between the output images and the target image as a metric to know whether there is a misappropriation. 

\begin{algorithm}[t]
	\caption{Backdoored diffusion model training given the priors are masked image, mask, and text, i.e., conditional generation.}  
	\label{alg:conditional}
	\begin{algorithmic}[1]
		\STATE {\bfseries Input:} $K,D$, stepsizes $\alpha$ and $\beta$, initializations $g_0$ and $\theta_0$, target image $\bm{y}$, dataset $\mathcal{D}=\{\mathcal{D}_c, \mathcal{D}_p\}$, function GenerateRandomMask() for generating random masks. 
		\FOR{$k=0,1,2,...,K$}
		\STATE{Set $g_k^0 = g_{k-1}^{D} \mbox{ if }\; k> 0$ and $g_0$ otherwise 
		}
		\FOR{$i=1,....,D$}
		\STATE{$\bm{M}' \leftarrow$} GenerateRandomMask()  
            \STATE{$(\bm{x}_0, c) \sim \{\mathcal{D}_c, \mathcal{D}_p$\}. Set $c=\emptyset$ with probability $50\%$.}
            \STATE{$\tilde{\bm\delta}_{\bm{x}_0, i}^M = g_k^{i-1}(\bm{x}_0 \odot \bm{M}', \bm{M}', \bm{y}) \odot \bm{M}'$}
            \STATE{$\bm\delta_{\bm{x}_0, i}^M = \mathrm{Proj}_{\|\cdot\|_\infty \leq C} (\tilde{\bm\delta}_{\bm{x}_0, i}^M)$}
		\STATE{$g_k^i = g_k^{i-1}-\alpha \nabla_{g} L_{inner}(\bm{\epsilon}_{\theta_k},\bm{x}_0 \odot \bm{M}', \bm{M}', c, \bm\delta_{\bm{x}_0, i}^M)$}
		\ENDFOR
                 \STATE{$(\bm{x}_0, c) \sim \{\mathcal{D}_c, \mathcal{D}_p$\}. Set $c=\emptyset$ with probability $50\%$.}
                 \STATE{$t \sim \text{Uniform}(\{1, \ldots, T\})$}
                 \STATE{$\bm{M} \leftarrow$} GenerateRandomMask()
                 \IF{$\bm{x}_0 \in \mathcal{D}_p$}
                 \STATE{Sample $\bm{x}_c \sim \mathcal{D}_c$}
                 \STATE{$\tilde{\bm\delta}^M_{\bm{x}_c} = g_k^D(\bm{x}_c \odot \bm{M}, \bm{M}, \bm{y}) \odot \bm{M}$}

                 \STATE{$\bm\delta_{\bm{x}_c}^M = \mathrm{Proj}_{\|\cdot\|_\infty \leq C} (\tilde{\bm\delta}_{\bm{x}_c}^M)$}
                 \ENDIF

                  \STATE{Get $\nabla_\theta L =\nabla_\theta L_{outer}\left(\bm{\epsilon}_\theta, \bm{x}_0, \bm{\tilde{x}_0}, \bm{M}, c, \bm\delta^M_{\bm{x}_c}, \bm{y}, t\right)$}
                 \STATE{$\theta_{k+1}=\theta_k- \beta \nabla_\theta L$
        }
		\ENDFOR
	\end{algorithmic}
	\end{algorithm}	

\vspace{-1em}
\section{Experiments}
\label{sec:exps}
\vspace{-1em}
\subsection{Implementation details}
\vspace{-0.7em}
For unconditional generation, we conduct the experiments on two commonly used datasets, CIFAR10($32\times 32$)~\cite{krizhevsky2009learning} and CELEBA-HQ($256\times 256$)~\cite{liu2015deep} used in ~\cite{chou2023backdoor, chou2023villandiffusion}. For conditional generation, we follow the text-guided image editing/inpainting pipeline in~\cite{nichol2021glide} and use the dataset MS COCO($64\times 64$)~\cite{lin2014microsoft}. The diffusion models are trained from scratch for 400 epochs on both CIFAR10 and CELEBA-HQ for unconditional generation and we also show that finetuning pre-trained models with less epochs is also feasible to inject the proposed invisible backdoor. For conditional case, we found that only finetuning for 5 epochs on about 10K images of MS COCO training data is enough to learn input-aware invisible backdoor. The learning rate of inner optimization is $1e-3$ for all cases, and for outer optimization, the learning rates are $2e-4$, $8e-5$, and $5e-4$ for CIFAR10, CELEBA-HQ, and MS COCO, respectively. To make inner optimization feasible, we sample 10, 3, and 5 steps to generate target images with DDIM~\cite{song2020denoising} sampling for CIFAR10, CELEBA-HQ, and MS COCO, respectively. All unconditional generation experiments are conducted on a single NVIDIA 3090 GPU, and all conditional generation experiments are conducted on a single NVIDIA A6000 GPU. Training on CIFAR10 and CELEBA-HQ from scratch spends about 3 days and 12 days respectively due to the large image resolution of CELEBA-HQ and multiple sampling steps in the inner optimization, which can be accelerated by more powerful GPUs. Training on MS COCO could be finished within about 30 mins since it only requires finetuning for 5 epochs. We mainly use three target images corresponding to the `Hat', `Shoe', and `Cat' target used in~\cite{chou2023backdoor, chou2023villandiffusion}. To evaluate the performance of backdoored model on utility and specificity, for unconditional case, we use FID~\cite{heusel2017gans} to evaluate the utility, and MSE to evaluate the specificity. We sample about 50K and 10K images to compute FID and MSE for CIFAR10 and CELEBA-HQ, respectively. For conditional case, we use FID and LPIPS~\cite{zhang2018unreasonable} to evaluate the utility and MSE to evaluate the specificity. We sample the same number of images as the MS COCO validation set (about 40K) to compute FID, LPIPS, and MSE.


\vspace{-1em}
\subsection{Unconditional generation results}
\vspace{-0.7em}
\paragraph{Universal backdoor triggers}
Firstly, we show the results for learning one universal invisible trigger on CIFAR10 and CELEBA-HQ. For CIFAR10, $\ell_\infty$ norm bound is set as 0.2 and the poison rate is 0.05. The `Hat' image is the target image. As shown in Figure~\ref{fig:visualize_universal} and Table~\ref{tab:visualize_universal}, the backdoored model can achieve similar FID with clean model and low MSE simultaneously while keeping the trigger invisible, demonstrating the high-utility and high-specificity as required by successful backdoor attack. 
\begin{figure}[h]
  \centering
  \vspace{-0.7em}
  \begin{minipage}{0.312\textwidth}
    \centering
    \includegraphics[width=\linewidth]{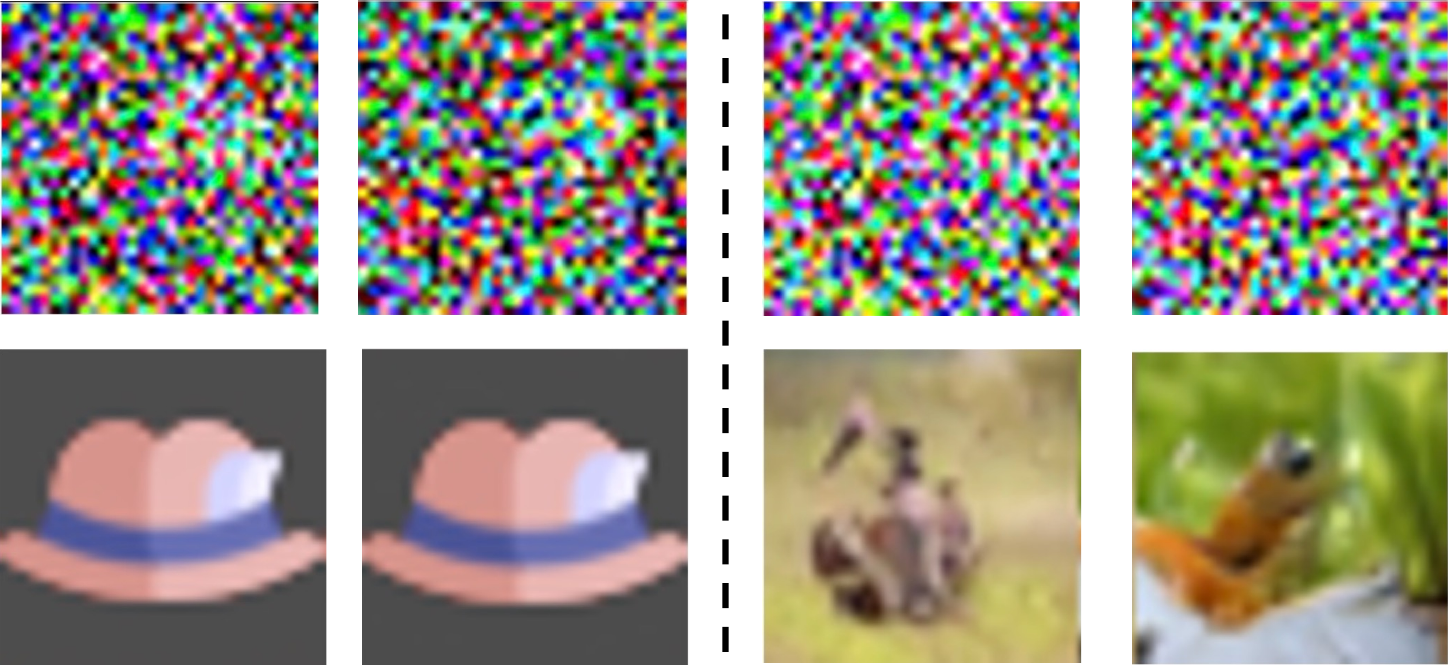} 
    \caption{Examples of learnable invisible universal trigger on CIFAR10.}
    \label{fig:visualize_universal}
  \end{minipage}\hfill
  \begin{minipage}{0.3\textwidth}
    \centering
    \includegraphics[width=\linewidth]{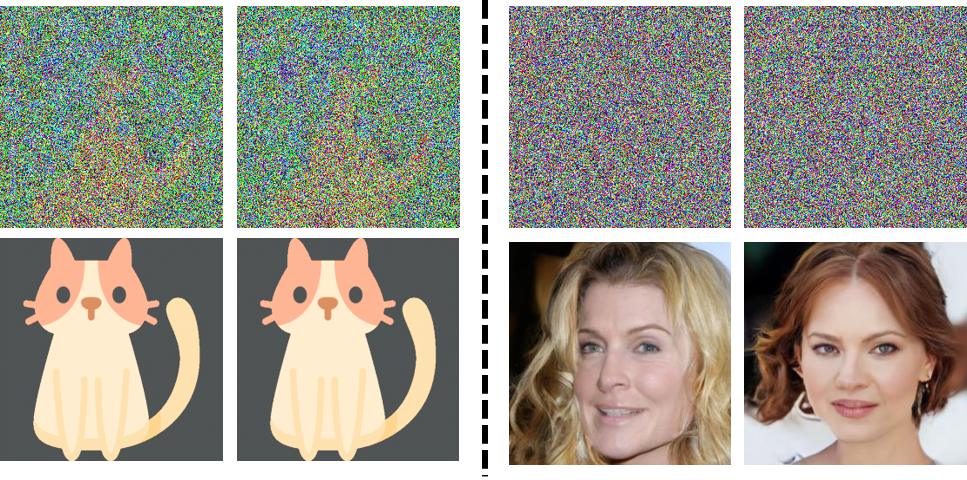} 
    \caption{Examples of universal trigger on high-resolution dataset, CELEBA-HQ.}
    \label{fig:visualize_universal1}
  \end{minipage}\hfill
  \begin{minipage}{0.3\textwidth}
\centering
\includegraphics[width=\linewidth]{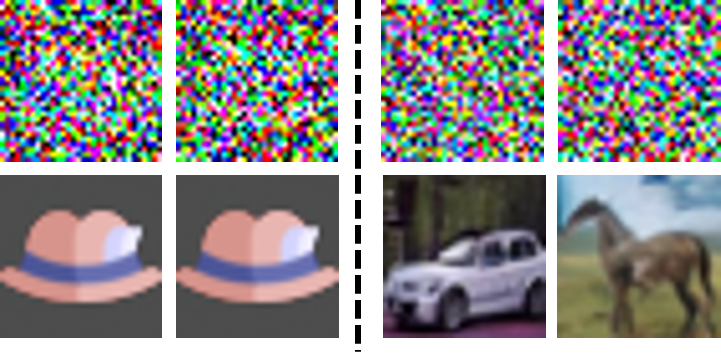}
\caption{Examples of learnable trigger distribution on CIFAR10.}
\label{fig:visualize_distribution}
\end{minipage}
\vspace{-1em}
\end{figure}
\begin{table}[ht]
\centering
\vspace{-1em}
\caption{FID and MSE results of the universal and sample-specific triggers on CIFAR10 and CELEBA-HQ, demonstrating high-utility and high-specificity.}
\begin{tabular}{c|c|c|c|c}
\hline
Trigger type & Dataset & Model type & FID & MSE \\
\hline
\multirow{4}{*}{Universal Trigger} & \multirow{2}{*}{CIFAR10} & Clean model & 12.80 & - \\
& & Backdoored model & 11.76 & 3.07e-3 \\
\cline{2-5}
& \multirow{2}{*}{CELEBA-HQ} & Clean model & 12.39 & - \\
& & Backdoored model & 11.19 & 4.57e-3 \\
\hline
\multirow{2}{*}{Sample-specific Trigger} & \multirow{2}{*}{CIFAR10} & Clean model & 12.80 & - \\
& & Backdoored model & 12.86 & 1.82e-5 \\
\hline
\end{tabular}
\label{tab:visualize_universal}
\end{table}
We further show the results on high-resolution datasets CELEBA-HQ($256 \times 256$) in Figure~\ref{fig:visualize_universal1} and Table~\ref{tab:visualize_universal}, where the $\ell_\infty$ norm bound is also 0.2 and the poison rate is 0.3. The `Cat' image is the corresponding target image. It could clearly observed that the proposed invisible trigger is still highly effective for high-resolution images.

\begin{wrapfigure}{r}{0.6\textwidth}
  \begin{center}
    \includegraphics[width=0.5\textwidth]{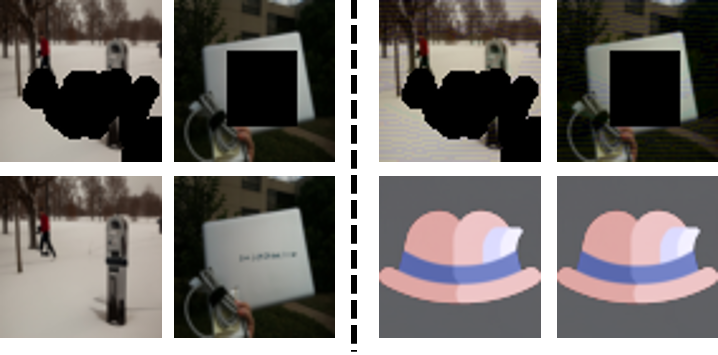}
  \end{center}
  \vspace{-1em}
  \caption{Examples of invisible input-aware trigger in conditional diffusion models. Given the masked image, the conditional diffusion will perform normally and edit the masked image following text description if there is no trigger inside. However, if the invisible trigger is inserted into the masked image, the model will output the target image regardless of any given text.}
  \label{fig:condition}
  \vspace{-1em}
\end{wrapfigure}
To further show the capability of the proposed framework, we show it is possible to learn multiple universal trigger-target pairs simultaneously. Examples are available in Appendix~\ref{sec:uncondition_multiple}. 
In addition, we also conducted experiments on different samplers, DPMSolver~\cite{lu2022dpm} to show that the proposed loss in unconditional generation can be directly applied to different commonly used samplers. Detailed results are shown in Appendix~\ref{sec:samplers}.

\vspace{-1em}
\paragraph{Distribution based trigger results}
As stated in Section~\ref{sec:method}, to make the invisible trigger even more stealthy, we can optimize the trigger distribution instead of universal trigger so that we are able to generate sample-specific triggers. The results are shown in Figure~\ref{fig:visualize_distribution} and Table~\ref{tab:visualize_universal}. Compared with universal trigger, our distribution based trigger achieve a even smaller gap on the FID while keeping an excellent performance on generating target image with very small MSE.


\vspace{-1em}
\subsection{Conditional generation results}
\vspace{-1em}
In this section, we show the results in text-guided image editing/inpainting pipeline on MS COCO dataset~\cite{lin2014microsoft}. For simplicity, we ignore the text part in visualization results since the proposed framework will generate the target image given any text if the backdoor is triggered. We randomly mask part of the clean images and send the masked images to the trigger generator for inserting input-aware triggers. By setting the norm bound as 0.04, we show the quantitative results on evaluation metrics and visualization results in Figure~\ref{fig:condition} and Table~\ref{tab:condition}, where the target image is the `Hat' image. As shown in Figure~\ref{fig:condition}, images with any shape of masks with our triggers will lead to the target `Hat' image, while image without the trigger would inpaint the image naturally. The results indicate that the backdoored model perform similarly to clean model when there are no triggers in the inputs, and generate target image when triggers are injected into inputs. 


\begin{table}[htbp]
\vspace{-1em}
\centering
\caption{Results on different evaluation metrics for learnable input-aware trigger in conditional diffusion models, which show the stealthiness and effectiveness of the attack, with no effect on the clean performance.}
\begin{tabular}{c|c|c|c}
\hline
& FID & LPIPS & MSE \\
\hline
Clean model & 1.00 & 0.064 & - \\
Backdoored model & 1.01 & 0.064 & 6.85e-3 \\
\hline
\end{tabular}
\label{tab:condition}
\end{table}

Figure~\ref{fig:visualize_norm_condition} shows the visualization results under different norm bounds. With larger norm bound, we can expect larger perturbations in the generated triggers, which is illustrated in the visualization results.
\begin{figure}[htbp]
    \centering
    \vspace{-1em}
    \includegraphics[width=1.0\linewidth]{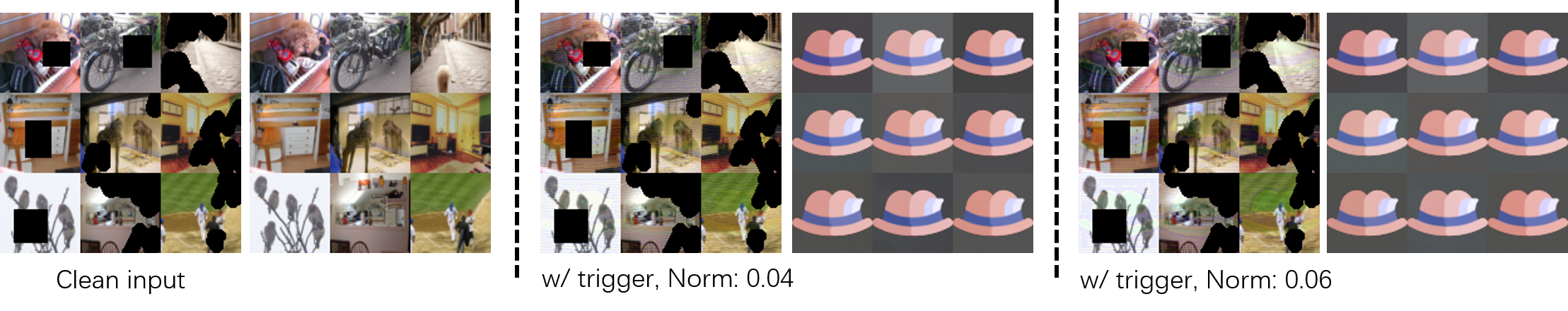}
    \caption{Visualization under different norm bounds, 0.04 and 0.06, with backdoored text-guided image editing model.}
    \label{fig:visualize_norm_condition}
    \vspace{-1em}
\end{figure}
We also conduct experiments to show it is possible to insert multiple targets in this case. Please refer to Appendix~\ref{sec:multiple_condition} for details.
\vspace{-1em}

\paragraph{Model watermarking}
\label{sec:watermarking}

As discussed in Section~\ref{sec:conditional}, our proposed backdoor trigger could be further developed into a watermark framework for conditional diffusion model (text-guided image editing/inpainting in our considered case).  To show the effectiveness of using our invisible backdoor as watermarks, we use the `Hat' image as the target image and insert the backdoor (watermark) into the conditional diffusion model. 
Table~\ref{tab:watermark} shows the MSE results between the outputs images and the target image for different query times. It could be observed that even with only 50 queries, there alreday exists a huge MSE gap between watermarked model and non-watermarked model so that we could just set threshold to 0.1 to decide if a model is derived from the watermarked one. 
\begin{table}[htbp]
\vspace{-1em}
\centering
\caption{MSE results between the outputs images and the target image for the watermarked model and non-watermarked model, under different number of queries. The results show that even with only 50 queries, the watermarked model can be differentiated from the non-watermarked model.}
\begin{tabular}{c|c|c|c|c|c|c}
\hline
Num of queries & 50 & 200 & 500 & 1000 & Mean & Variance \\
\hline
Watermarked model & 2.22e-2 & 2.05e-2 & 3.65e-2 & 2.66e-2 & 2.64e-2 & 3.86e-5 \\
Non-Watermarked Model & 0.452 & 0.458 & 0.452 & 0.448 & 0.452 & 1.28e-5\\
\hline
\end{tabular}
\label{tab:watermark}
\end{table}

\vspace{-1em}

\subsection{Results under counter-measurements}
\label{subsec:defense}
\vspace{-0.7em}
Recent papers also propose various counter-measurements against backdoor attack in the diffusion model. To verify our proposed attack's robustness, we test various defense methods against the proposed framework. We first tried fine-tuning the backdoored model with clean data however we find it still couldn't mitigate the inserted backdoors. Secondly, as suggested by previous work~\cite{chou2023backdoor}, Adversarial Neuron Pruning~\cite{wu2021adversarial} and Inference-time Clipping are two most effective defense methods against backdoor in diffusion models. They show the inference-time clipping, which clips the latent generation during sampling to the range $[-1, 1]$, is effective on their proposed attack. However, we found that both of them become totally ineffective in our proposed framework. Thirdly, we test the most recent defense method Elijah~\cite{an2024elijah} which is specifically designed for backdoors in diffusion models. The results show that Elijah is also entirely ineffective in our framework, which indicates more advanced defense methods need to be developed for diffusion models. For detailed results, please refer to Appendix~\ref{sec:clip}. 

\vspace{-1em}

\subsection{Ablation study }
\label{subsubsec:ablation}
\vspace{-0.7em}
\paragraph{Norm bound} In the inner loop of the bi-level optimization, we project the generated triggers into an $\ell_\infty$ norm ball to ensure trigger invisibility. Here, we explore the impact of varying norm bounds on our model. The visualization results in Figure~\ref{fig:visualize_norm} illustrate how different norm values affect the unconditional generation case. 

\begin{table}[ht]
\vspace{-1em}
\centering
\begin{minipage}{0.49\linewidth}
\centering
\includegraphics[width=66mm]{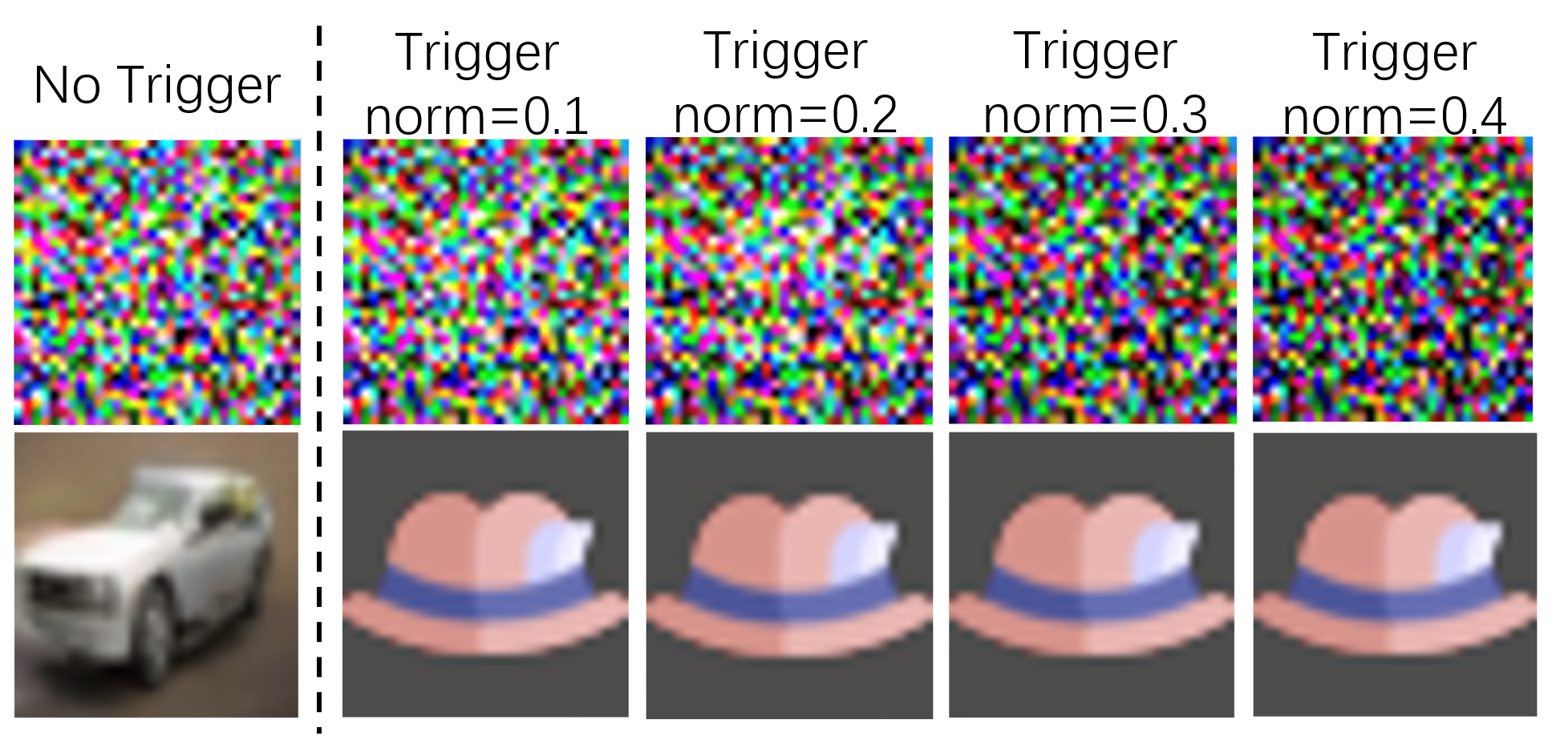}
\captionof{figure}{Illustration of the effect of different norm bounds on unconditional generation.}
\label{fig:visualize_norm}
\end{minipage}\hfill
\begin{minipage}{0.49\linewidth}
\centering
\caption{FID and MSE results with different norm bounds on CIFAR10.}
\begin{tabular}{c|c|c|c|c}
\hline
& \multicolumn{2}{c|}{Target `Hat'} & \multicolumn{2}{c}{Target `Cat'} \\ \hline
Norm & FID & MSE & FID & MSE \\ \hline
0.1 & 13.01 & 2.28e-3 & 12.92 & 2.75e-3 \\ \hline
0.2 & 12.44 & 8.13e-5 & 12.56 & 1.01e-5 \\ \hline
0.3 & 12.38 & 1.06e-3 & 12.69 & 6.00e-4 \\ \hline
0.4 & 12.35 & 1.92e-4 & 12.93 & 2.77e-6 \\ \hline
\end{tabular}
\label{tab:norm}
\end{minipage}
\end{table}

\vspace{-1em}

We also measure the generated target image on two different targets (`Hat' and `Cat') in Table~\ref{tab:norm}. The FID corresponding to clean model is 12.80, as shown in Table~\ref{tab:visualize_universal}.
Notably, even with a low norm value of 0.1 (indicating invisibility), our optimization framework successfully implants a backdoor with high specificity while maintaining a comparable FID score (utility) in contrast to the clean model, which has an FID of 12.80, as demonstrated in Table~\ref{tab:visualize_universal}. This finding is particularly insightful when considering the application of our framework in conditional settings, where trigger invisibility is pivotal for the stealthy integration of our implanted backdoor.

\vspace{-0.7em}
\paragraph{Poison rates} We performed experiments to demonstrate the impact of different poison rates, as illustrated in Figure~\ref{fig:fid_poisonrate} and~\ref{fig:mse_poisonrate}. As the poison rate increases, we observe that the FID score increases while the MSE (Mean Squared Error) decreases. This aligns with our expectations since a larger poison rate implies a more substantial impact on clean performance. When employing a high poison rate (e.g., poison rate of 0.5), we find that the FID remains comparable to that of the clean model. This observation suggests that the proposed framework maintains its effectiveness across a range of settings and is resilient even under substantial poisoning conditions.

\begin{figure}[htbp]
  \centering
  \begin{minipage}{0.45\textwidth}
    \centering
    \includegraphics[width=0.7\linewidth]{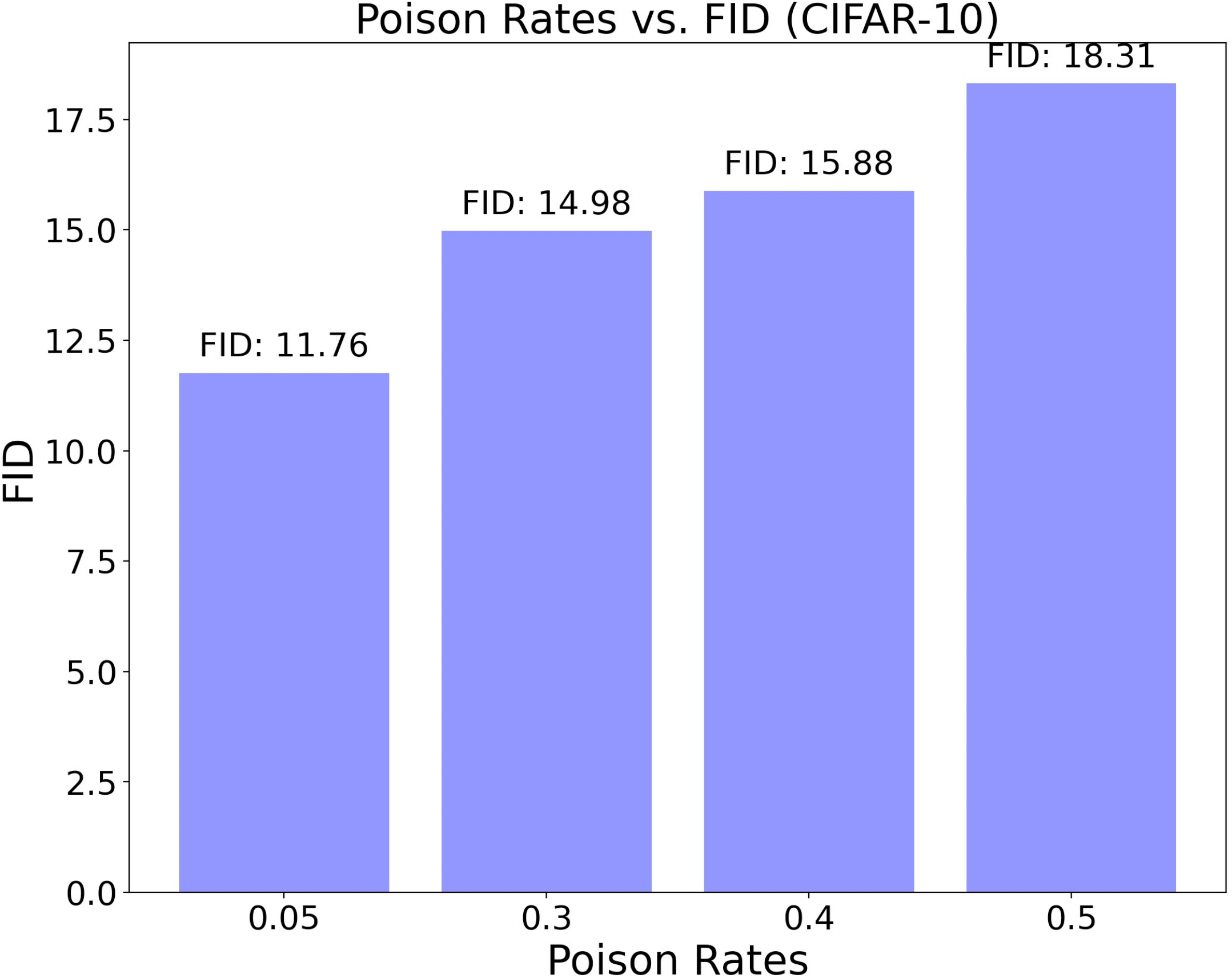} 
    \caption{FID results for different poison rates on CIFAR10.}
    \label{fig:fid_poisonrate}
  \end{minipage}\hfill
  \begin{minipage}{0.45\textwidth}
    \centering
    \includegraphics[width=0.7\linewidth]{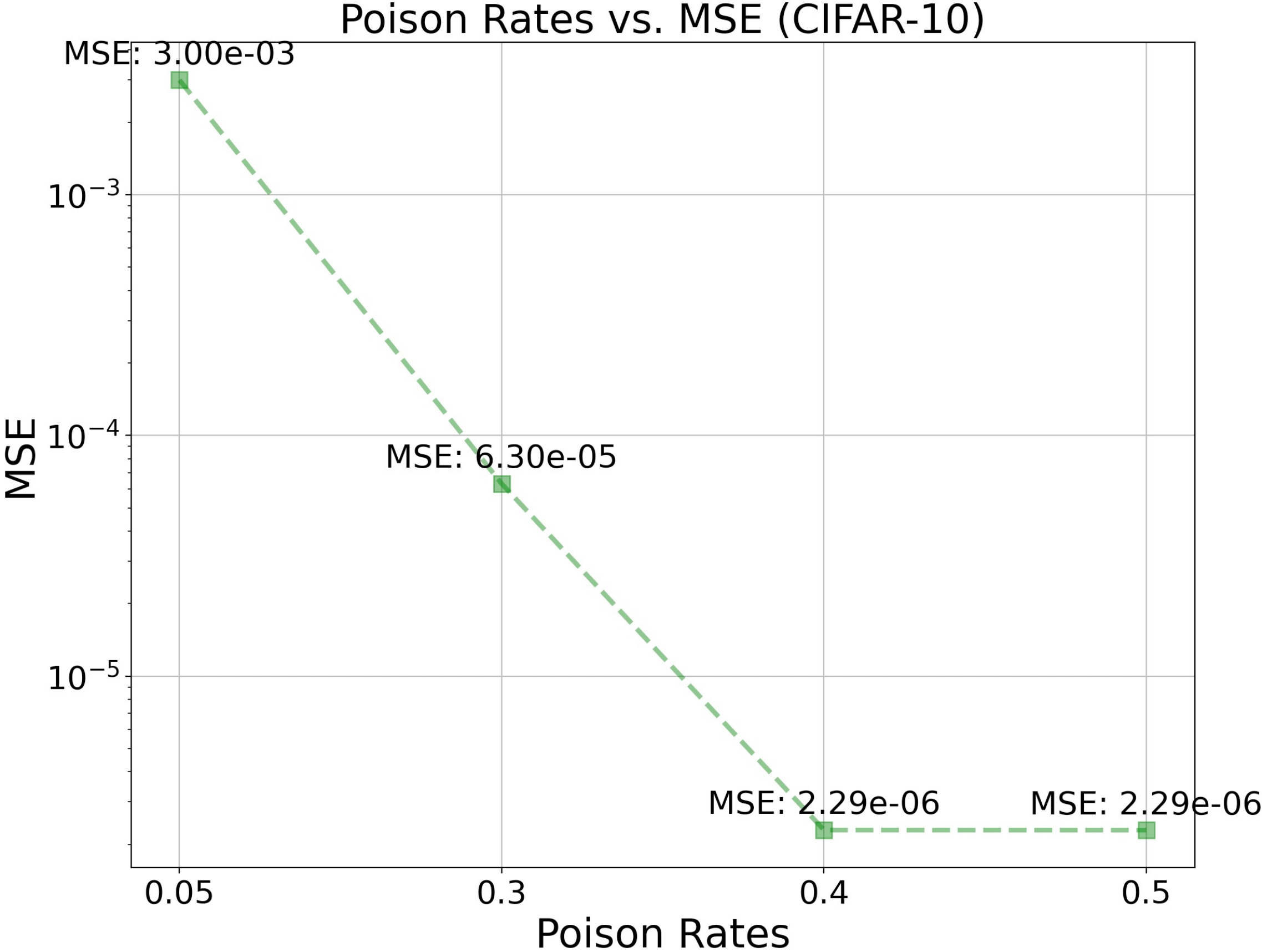} 
    \caption{MSE results for different poison rates on CIFAR10.}
    \label{fig:mse_poisonrate}
    \end{minipage}
    \vspace{-1em}
\end{figure}
Furthermore, we demonstrate that finetuing with less epochs is enough to effectively insert the backdoors, greatly reducing the time cost and making the proposed framework more practical. (Results deferred to Appendix~\ref{sec:finetune} due to space limit).

\vspace{-1em}
\section{Conclusion and Limitations}
\label{sec:conclusion}
\vspace{-1em}
In this paper, we introduce an innovative and versatile optimization framework designed to learn input-aware invisible triggers, enabling the backdooring of diffusion models applicable to both unconditional and conditional scenarios. 
Our work marks the pioneering demonstration of backdooring in the text-guided image editing pipeline within conditional diffusion models. The application to model watermarking further enhances its importance. Our proposed framework sheds light on the significant security threats posed by diffusion models, emphasizing the need for comprehensive exploration and understanding. Looking ahead, since the proposed framework involves bi-level optimization which is generally a little time-consuming, we will explore different strategies such as faster sampling to further accelerate the training or finetuning process in the future. Moreover, our future works will also focus on developing effective defense methods to mitigate potential backdoors in diffusion models, contributing to the advancement of secure and reliable implementations in various applications.




{
\small

\bibliographystyle{plain}
\bibliography{ref}
}


\newpage

\appendix

\section{Broader Impact}
\label{sec:broader}

Our work offers significant benefits to both the research community focusing on backdoor attacks and those engaged in industrial applications.

For the research community, we present a groundbreaking and potent backdoor attack, exposing a previously overlooked potential threat. By employing an invisible attack trigger, our novel approach easily circumvents human inspection, necessitating the development of more advanced defense methods to effectively mitigate future risks in research.

Regarding industrial applications, our findings enable model owners to consider the implications of our proposed attack and implement suitable protection strategies for enhanced deployment. Additionally, model users can now be mindful of the potential existence of such robust attacks in third-party models, allowing them to exercise greater caution when utilizing these models.

\section{Loss function based on DDIM sampling}
\label{sec:loss}

As we show in Equation~\ref{eq:ddim_backdoor}, $q_\sigma (\bm{x}'_{t-1}|\bm{x}'_t, \bm{x}'_0) = \mathcal{N}(\sqrt{\bar{\alpha}_{t-1}} \bm{x}'_0 + (1-\sqrt{\bar{\alpha}_{t-1}}) \bm{\delta} + \sqrt{1-\bar{\alpha}_{t-1}-\sigma_t^2} \cdot \frac{\bm{x}'_t - \sqrt{\bar{\alpha}_t} \bm{x}'_0 - (1-\sqrt{\bar{\alpha}_t}) \bm{\delta}}{\sqrt{1-\bar{\alpha}_t}}, \sigma_t^2 \bm{I})$,
where the mean function is chosen to ensure that $q_\sigma(\bm{x}'_t|\bm{x}'_0) = \mathcal{N}(\bm{x}'_t; \sqrt{\bar{\alpha}_t} \bm{x}'_0 + (1-\sqrt{\bar{\alpha}_t}) \bm{\delta}, (1-\bar{\alpha}_t) \bm{I})$. We provide the proof in the following.

\begin{lemma}
    Let $q_\sigma (\bm{x}'_{1:T}|\bm{x}'_0)$ and $q_\sigma (\bm{x}'_{t-1}|\bm{x}'_t, \bm{x}'_0)$ be defined by Equation~\ref{eq:ddim_backdoor}, we have
    \begin{align*}
        q_\sigma(\bm{x}'_t|\bm{x}'_0) = \mathcal{N}(\bm{x}'_t; \sqrt{\bar{\alpha}_t} \bm{x}'_0 + (1-\sqrt{\bar{\alpha}_t}) \bm{\delta}, (1-\bar{\alpha}_t) \bm{I}).
    \end{align*}
\end{lemma}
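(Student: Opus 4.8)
## Proof Proposal

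The plan is to prove the identity by induction on $t$, running downward from $t = T$, exactly as in the original DDIM derivation (Song et al., Lemma 1), but carrying the extra drift term $(1-\sqrt{\bar\alpha_t})\bm\delta$ through the argument. The base case $t = T$ holds by definition, since $q_\sigma(\bm{x}'_T|\bm{x}'_0) = \mathcal{N}(\sqrt{\bar\alpha_T}\bm{x}'_0 + (1-\sqrt{\bar\alpha_T})\bm\delta, (1-\bar\alpha_T)\bm{I})$ is precisely the claimed form at $t = T$ (note that as written in Equation~\ref{eq:ddim_backdoor} the stated marginal at $T$ is $\mathcal{N}(\sqrt{\bar\alpha_T}\bm{x}'_0, (1-\bar\alpha_T)\bm{I})$, so I would first reconcile this: either treat the $\bm\delta$-shifted version as the intended definition, or absorb the discrepancy — I will assume the intended marginal includes the drift, consistent with the loss in Equation~\ref{eq:loss}).

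For the inductive step, assume $q_\sigma(\bm{x}'_t|\bm{x}'_0) = \mathcal{N}(\sqrt{\bar\alpha_t}\bm{x}'_0 + (1-\sqrt{\bar\alpha_t})\bm\delta, (1-\bar\alpha_t)\bm{I})$. Then I would compute $q_\sigma(\bm{x}'_{t-1}|\bm{x}'_0) = \int q_\sigma(\bm{x}'_{t-1}|\bm{x}'_t,\bm{x}'_0)\, q_\sigma(\bm{x}'_t|\bm{x}'_0)\, d\bm{x}'_t$. Since both factors are Gaussian and the conditional mean is affine in $\bm{x}'_t$, the marginal is Gaussian and I only need to track its mean and covariance. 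Writing $\bm{x}'_t = \sqrt{\bar\alpha_t}\bm{x}'_0 + (1-\sqrt{\bar\alpha_t})\bm\delta + \sqrt{1-\bar\alpha_t}\,\bm\eta$ with $\bm\eta \sim \mathcal{N}(\bm 0, \bm I)$, the ``innovation'' term $\frac{\bm{x}'_t - \sqrt{\bar\alpha_t}\bm{x}'_0 - (1-\sqrt{\bar\alpha_t})\bm\delta}{\sqrt{1-\bar\alpha_t}}$ reduces exactly to $\bm\eta$, which is the whole point of the shifted mean function. Hence the mean of $q_\sigma(\bm{x}'_{t-1}|\bm{x}'_0)$ is $\sqrt{\bar\alpha_{t-1}}\bm{x}'_0 + (1-\sqrt{\bar\alpha_{t-1}})\bm\delta + \sqrt{1-\bar\alpha_{t-1}-\sigma_t^2}\,\bm\eta$ averaged over $\bm\eta$, giving $\sqrt{\bar\alpha_{t-1}}\bm{x}'_0 + (1-\sqrt{\bar\alpha_{t-1}})\bm\delta$ as desired, and the covariance is $\sigma_t^2\bm I + (1-\bar\alpha_{t-1}-\sigma_t^2)\bm I = (1-\bar\alpha_{t-1})\bm I$, using independence of the conditional noise from $\bm\eta$.

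The main obstacle — and really the only non-bookkeeping point — is verifying that the innovation term collapses to a clean standard Gaussian under the inductive hypothesis; this is exactly where the particular choice of drift coefficient $(1-\sqrt{\bar\alpha_{t-1}})$ (rather than, say, $\sqrt{1-\bar\alpha_{t-1}}$ or something else) matters, since it must match the drift appearing inside the subtracted mean at level $t$. Everything else is the standard ``sum of independent Gaussians, affine pushforward'' computation. I would close by remarking that setting $\sigma_t = 0$ recovers the deterministic DDIM update and that differentiating the implied $\bm{x}'_t(\bm{x}'_0,\bm\delta,\bm\epsilon)$ against the model prediction yields the $\bm\epsilon + \zeta_t\bm\delta$ target of Equation~\ref{eq:loss}, which connects the lemma back to the training objective.
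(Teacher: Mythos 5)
Your proposal is correct and takes essentially the same route as the paper's own proof: a downward induction in which the marginal $q_\sigma(\bm{x}'_{t-1}|\bm{x}'_0)$ is obtained by integrating the Gaussian product, the paper invoking Bishop (2.115) where you carry out the equivalent explicit reparameterization with $\bm{\eta}$. Your side observation that the base case as literally written in Equation~\ref{eq:ddim_backdoor} omits the $(1-\sqrt{\bar{\alpha}_T})\bm{\delta}$ shift is a genuine discrepancy the paper's proof passes over silently, and your resolution (taking the $\bm{\delta}$-shifted marginal as the intended definition, consistent with the loss in Equation~\ref{eq:loss}) is the right one.
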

\begin{proof}
    To prove it, we use a similar way to~\cite{song2020denoising}. Assume $\forall t \leq T$, $q_\sigma(\bm{x}'_t|\bm{x}'_0) = \mathcal{N}(\sqrt{\bar{\alpha}_t} \bm{x}'_0 + (1-\sqrt{\bar{\alpha}_t}) \bm{\delta}, (1-\bar{\alpha}_t) \bm{I})$. Now if we can prove $q_\sigma(\bm{x}'_{t-1}|\bm{x}'_0) = \mathcal{N}(\sqrt{\bar{\alpha}_{t-1}} \bm{x}'_0 + (1-\sqrt{\bar{\alpha}_{t-1}}) \bm{\delta}, (1-\bar{\alpha}_{t-1}) \bm{I})$, then the statement can be proved by induction.

    We have
    \begin{align}
        q_\sigma (\bm{x}'_{t-1} | \bm{x}'_0) = \int_{\bm{x'_t}} q_\sigma (\bm{x}'_t | \bm{x}'_0) q_\sigma (\bm{x}'_{t-1}|\bm{x}'_t, \bm{x}'_0) d\bm{x}'_t, 
    \end{align}
    where 
    \begin{align*}
    q_\sigma(\bm{x}'_t|\bm{x}'_0) = &\mathcal{N}(\sqrt{\bar{\alpha}_t} \bm{x}'_0 + (1-\sqrt{\bar{\alpha}_t}) \bm{\delta}, (1-\bar{\alpha}_t) \bm{I}), \\
        q_\sigma (\bm{x}'_{t-1}|\bm{x}'_t, \bm{x}'_0) = &\mathcal{N}(\sqrt{\bar{\alpha}_{t-1}} \bm{x}'_0 + (1-\sqrt{\bar{\alpha}_{t-1}}) \bm{\delta}  \\
    &+ \sqrt{1-\bar{\alpha}_{t-1}-\sigma_t^2} \cdot \frac{\bm{x}'_t - \sqrt{\bar{\alpha}_t} \bm{x}'_0 - (1-\sqrt{\bar{\alpha}_t}) \bm{\delta}}{\sqrt{1-\bar{\alpha}_t}}, \sigma_t^2 \bm{I}).
    \end{align*}

    Then from~\cite{bishop2006pattern} (2.115), we can write $q_\sigma (\bm{x}'_{t-1} | \bm{x}'_0)$ as Gaussian distribution, where the mean
    \begin{align}
        \bm{\mu} = &\sqrt{\bar{\alpha}_{t-1}} \bm{x}'_0 + (1-\sqrt{\bar{\alpha}_{t-1}}) \bm{\delta} \\ &+ \sqrt{1-\bar{\alpha}_{t-1}-\sigma_t^2} \cdot \frac{\sqrt{\bar{\alpha}_t} \bm{x}'_0 + (1-\sqrt{\bar{\alpha}_t}) \bm{\delta} - \sqrt{\bar{\alpha}_t} \bm{x}'_0 - (1-\sqrt{\bar{\alpha}_t}) \bm{\delta}}{\sqrt{1-\bar{\alpha}_t}}  \notag \\
        = &\sqrt{\bar{\alpha}_{t-1}} \bm{x}'_0 + (1-\sqrt{\bar{\alpha}_{t-1}}) \bm{\delta},
    \end{align}

    variance
    \begin{align}
        \bm{\Sigma} = \left( \frac{1-\bar{\alpha}_{t-1}-\sigma_t^2}{1-\bar{\alpha}_t} \cdot (1-\bar{\alpha}_t)\right) \bm{I} + \sigma^2_t \bm{I} = (1-\bar{\alpha}_{t-1}) \bm{I}.
    \end{align}

    Hence
    \begin{align}
        q_\sigma(\bm{x}'_{t-1}|\bm{x}'_0) = \mathcal{N}(\sqrt{\bar{\alpha}_{t-1}} \bm{x}'_0 + (1-\sqrt{\bar{\alpha}_{t-1}}) \bm{\delta}, (1-\bar{\alpha}_{t-1}) \bm{I}),
    \end{align}
    which finishes the proof.
\end{proof}

Since we consider DDIM sampling, we can set $\sigma_t=0$ to simplify the derivation. On the other hand, from $q_\sigma(\bm{x}'_t|\bm{x}'_0) = \mathcal{N}(\bm{x}'_t; \sqrt{\bar{\alpha}_t} \bm{x}'_0 + (1-\sqrt{\bar{\alpha}_t}) \bm{\delta}, (1-\bar{\alpha}_t) \bm{I})$, we have
\begin{align}
\label{eq:reverse}
    \bm{x}'_0 = \frac{\bm{x}'_t - \sqrt{1-\bar{\alpha}_t} \bm\epsilon - (1-\sqrt{\bar{\alpha}_t})\bm\delta}{\sqrt{\bar{\alpha}_t}}, \bm\epsilon \sim \mathcal{N}(\bm{0}, \bm{I}).
\end{align}

Given the above reverse transition $q_\sigma (\bm{x}'_{t-1}|\bm{x}'_t, \bm{x}'_0) = \mathcal{N}(\sqrt{\bar{\alpha}_{t-1}} \bm{x}'_0 + (1-\sqrt{\bar{\alpha}_{t-1}}) \bm{\delta} + \sqrt{1-\bar{\alpha}_{t-1}-\sigma_t^2} \cdot \frac{\bm{x}'_t - \sqrt{\bar{\alpha}_t} \bm{x}'_0 - (1-\sqrt{\bar{\alpha}_t}) \bm{\delta}}{\sqrt{1-\bar{\alpha}_t}}, \sigma_t^2 \bm{I})$, substitute $\bm{x}'_0$ with Equation~\ref{eq:reverse}. After rearranging the common terms, the reverse transition can be rewritten as
\begin{align}
\label{eq:transition}
    &q_\sigma (\bm{x}'_{t-1}|\bm{x}'_t, \bm{x}'_0) = \notag \\
    &\frac{\sqrt{\bar\alpha_{t-1}}}{\sqrt{\bar{\alpha}_t}} \left[\bm{x}'_t - \frac{\sqrt{\bar\alpha_{t-1}} \sqrt{1-\bar\alpha_t} - \sqrt{\bar\alpha_t} \sqrt{1-\bar\alpha_{t-1}}}{\sqrt{\bar\alpha_{t-1}}} \left(\bm{\epsilon} + \frac{\sqrt{\bar{\alpha}_{t-1}} - \sqrt{\bar{\alpha}_t}}{\sqrt{\bar{\alpha}_{t-1}} \sqrt{1-\bar{\alpha}_t} - \sqrt{\bar{\alpha}_t} \sqrt{1-\bar{\alpha}_{t-1}}} \bm{\delta}\right)\right].
\end{align}

Equation~\ref{eq:transition} indicates that now we need to train the network to predict $\bm{\epsilon} + \frac{\sqrt{\bar{\alpha}_{t-1}} - \sqrt{\bar{\alpha}_t}}{\sqrt{\bar{\alpha}_{t-1}} \sqrt{1-\bar{\alpha}_t} - \sqrt{\bar{\alpha}_t} \sqrt{1-\bar{\alpha}_{t-1}}} \bm{\delta}$ instead of only $\bm\epsilon$ for backdoor training, which leads to the loss function in Equation~\ref{eq:loss}.

\section{Discussion on the importance and motivation of backdooring diffusion models with invisible triggers}
\label{sec:motivation}

As also discussed in previous work~\cite{chou2023backdoor, chou2023villandiffusion, chen2023trojdiff}, backdooring diffusion models is an important topic for safe utilization of diffusion models. Since the powerful models like Stable Diffusion~\cite{rombach2022high} is open-sourced, anyone could download the model and conduct malicious fine-tuning to insert a secret backdoor that can exhibit a designated action (e.g. generating a inappropriate or incorrect images). Explicitly, the generated output will be directly controlled by activating backdoor for conducting some bad actions like disseminating propaganda, generating fake contents etc. 
Meanwhile, implicitly, as also discussed in~\cite{chou2023backdoor}, the diffusion model has been widely used in a lot of different downstream tasks and applications such as reinforcement learning, object detection, and semantic segmentation~\cite{baranchuk2021label, chen2022offline, chen2023diffusiondet}. Hence if the diffusion model is backdoored, this Trojan effect can bring immeasurable cartographic damage to all downstream tasks and applications.

Given the importance of backdooring diffusion models, exploring invisibility of image triggers could further help the community understand the potential security threat better. As both mentioned in~\cite{doan2021lira, doan2021backdoor}, it is important to improve the fidelity of poisoned examples that are used to inject the backdoor and hence reduce the perceptual detectability by human observers. In the unconditional case, it is thus important to make the sampled noise to be similar with random noise used in the practice or it could be easily filtered by human inspection. As shown in Figure~\ref{fig:trigger} and Figure~\ref{fig:visualize_universal}, the triggers used by previous works (also in the unconditional case) could be easily detected through human inspection without any effort. In contrast, our proposed invisible trigger is nearly visually indistinguishable from the original input, which greatly increase attack's stealth so that human inspection would no longer effective. In addition to unconditional generation, invisible triggers are particularly practical in conditional diffusion models, which hasn't been explored and discussed by the previous works. To be noted, as we show in Section~\ref{sec:watermarking}, the proposed invisible triggers in conditional generation can be directly applied to model watermarking for model ownership verification in practice, further enhancing the significance of our proposed framework.

\section{Preliminary on diffusion models}
\label{sec:diffusion}  
Diffusion models consist of two processes, the forward/diffusion process as a Markov chain and the backward/reverse process~\cite{ho2020denoising}. In the diffusion process, given an image sampled from real data distribution, 
Gaussian noise is gradually added to real data $\bm{x}_0 \in \mathbb{R}^d$ sampled from the real data distribution $q(\bm{x}_0)$ for $T$ steps, producing a series of noisy copies $\bm{x}_1, \bm{x}_2, \cdots, \bm{x}_T \in \mathbb{R}^d$. 
As $T \to \infty$, $\bm{x}_T$ will follow the isotropic Gaussian distribution, i.e., $\bm{x}_T \sim \mathcal{N}(\bm{0}, \bm{I})$. More formally, the diffusion process is defined as
\begin{equation}
\begin{aligned}
    &q(\bm{x}_{1:T}|\bm{x}_0) := \prod_{t=1}^T q(\bm{x}_t|\bm{x}_{t-1}), \quad \quad \\&q(\bm{x}_t|\bm{x}_{t-1}) := \mathcal{N}(\bm{x}_t; \sqrt{1-\beta_t} \bm{x}_{t-1}, \beta_t \bm{I}).
\end{aligned}    
\end{equation}

By defining $\alpha_t := 1 - \beta_t$, $\bar{\alpha}_t := \prod_{s=1}^t \alpha_s$, we have
\begin{align}
    q(\bm{x}_t|\bm{x}_0) = \mathcal{N}(\bm{x}_t; \sqrt{\bar{\alpha}_t} \bm{x}_0, (1-\bar{\alpha}_t) \bm{I}).
\end{align}

We can simulate the true data distribution by reversing the diffusion process as described above. Hence the reverse process can also be defined as a Markov chain with learned Gaussian transitions starting from $p(\bm{x}_T)=\mathcal{N}(\bm{x}_T; \bm{0}, \bm{I})$:
\begin{equation}
\begin{aligned}
    &p_\theta (\bm{x}_{0:T}) := p(\bm{x}_T) \prod_{t=1}^T p_\theta (\bm{x}_{t-1}|\bm{x}_t), \quad \quad\\ 
    &p_\theta (\bm{x}_{t-1}|\bm{x}_t) := \mathcal{N} (\bm{x}_{t-1}; \bm{\mu}_\theta (\bm{x}_t, t), \bm{\Sigma}_\theta (\bm{x}_t, t)).
\end{aligned}    
\end{equation}

The training is performed by optimizing the variational lower bound, which can be further rewritten as comparing the KL divergence between the $p_\theta (\bm{x}_{t-1}|\bm{x}_t)$ and $q(\bm{x}_{t-1}|\bm{x}_t, \bm{x}_0)$ as
\begin{equation}
\begin{aligned}
    &q(\bm{x}_{t-1}|\bm{x}_t, \bm{x}_0) = \mathcal{N}(\bm{x}_{t-1}; \tilde{\bm{\mu}}_t(\bm{x}_t, \bm{x}_0), \tilde{\beta}_t \bm{I}), \quad \quad \\
    &\tilde{\bm{\mu}}_t(\bm{x}_t, \bm{x}_0) = \frac{1}{\sqrt{\alpha_t}} \left(\bm{x}_t (\bm{x}_0, \bm{\epsilon}) - \frac{\beta_t}{\sqrt{1 - \bar{\alpha}_t}} \bm{\epsilon} \right),
\end{aligned}    
\end{equation}
where $\bm{x}_t(\bm{x}_0, \bm{\epsilon}) = \sqrt{\bar{\alpha}_t} \bm{x}_0 + \sqrt{1 - \bar{\alpha}_t} \bm{\epsilon}, \bm{\epsilon} \sim \mathcal{N}(\bm{0}, \bm{I})$.

Due to the property of Gaussian distribution, 
the loss function can be further written as
\begin{equation}
\begin{aligned}
    L = \mathbb{E}_{t, \bm{x}_0, \bm{\epsilon}} \left[\|\bm{\epsilon} - \bm{\epsilon}_\theta(\sqrt{\bar{\alpha}_t} \bm{x}_0 + \sqrt{1 - \bar{\alpha}_t} \bm{\epsilon}, t)\|^2\right].
\end{aligned}    
\end{equation}

\section{Discussion on learnable invisible triggers through bi-level optimization in classification models}
\label{sec:invisible_cls}

As mentioned in Section~\ref{sec:intro}, learning invisible triggers by bi-level optimization in diffusion models is different and much harder compared to finding one in classification models. The method developed for backdooring classification models cannot be directly or easily extended to backdoor diffusion models. Specifically, the threat model is totally different. Diffusion models consist of diffusion and reverse processes that fundamentally differs from classification models. Backdooring diffusion model needs to have careful control of the training procedure while only poisoning data needs to be added in the classification model. At the same time, it is nontrivial and challenging to design the backdoor objective in the conditional and unconditional diffusion model  while it is relatively a simple task in the classification. To learn invisible backdoors for both unconditional and conditional diffusion models, the entire pipeline, training paradigm, and training loss have to be redesigned  to differ significantly when applying bi-level optimization to backdoor diffusion models.
In this setting, the training loss, training paradigm, and pipeline are specifically designed based on the properties of diffusion models differing substantially from backdooring classification models through bi-level optimization.

\section{Training procedure for backdooring unconditional diffusion models}
\label{sec:training_unconditional}

The training algorithm for backdooring unconditional diffusion models is shown in Algorithm~\ref{alg:unconditional}.

\begin{algorithm}[t]
	\caption{Backdoored diffusion model training given the prior is random noise, i.e., unconditional generation.}  
	\label{alg:unconditional}
	\begin{algorithmic}[1]
		\STATE {\bfseries Input:} $K,D$, stepsizes $\alpha$ and $\beta$, initializations $\bm\delta_0$ and $\theta_0$, target image $\bm{y}$, dataset $\mathcal{D}=\{\mathcal{D}_c, \mathcal{D}_p\}$. 
		\FOR{$k=0,1,2,...,K$}
		\STATE{Set $\bm\delta_k^0 = \bm\delta_{k-1}^{D} \mbox{ if }\; k> 0$ and $\bm\delta_0$ otherwise 
		}
		\FOR{$i=1,....,D$}
		\STATE{$\tilde{\bm{\epsilon}} \sim \mathcal{N}(\bm{0}, \bm{I})$}  
		\STATE{$\tilde{\bm\delta}_k^i = \bm\delta_k^{i-1}-\alpha \nabla_{\bm\delta} L_{inner}(\bm{\epsilon}_{\theta_k},\tilde{\bm{\epsilon}}, \bm{\delta}_k^{i-1})$}
            \STATE{$\bm\delta_k^i=\mathrm{Proj}_{\|\cdot\|_\infty \leq C}(\tilde{\bm\delta}_k^i$)}
		\ENDFOR
                 \STATE{$\bm{x}_0 \sim \{\mathcal{D}_c, \mathcal{D}_p$\}}
                 \STATE{$t \sim \text{Uniform}(\{1, \ldots, T\})$}
                 \STATE{$\bm{\epsilon} \sim \mathcal{N}(\bm{0}, \bm{I})$}

                  \STATE{Compute gradient $\nabla_\theta L_{outer}\left(\bm{\epsilon}_{\theta_k}, \bm{x}_0, \bm{\epsilon}, \bm{\delta}_k^D, \bm{y}, t\right)$}
                 \STATE{Let $\theta_{k+1}=\theta_k- \beta \nabla_\theta L_{outer}\left(\bm{\epsilon}_{\theta_k}, \bm{x}_0, \bm{\epsilon}, \bm{\delta}_k^D, \bm{y}, t\right)$}
		\ENDFOR
	\end{algorithmic}
	\end{algorithm}	

\section{Multiple universal trigger-target pairs}
\label{sec:uncondition_multiple}

To further show the capability of the proposed framework, we show it is possible to learn multiple universal trigger-target pairs simultaneously. The results with two trigger-target pairs on CIFAR10 are shown in Table~\ref{tab:multiple}, which indicate that the framework can be directly extended to learn multiple universal trigger-target pairs.
\begin{table}[htbp]
\centering
\caption{Results on two universal trigger-target pairs, which show the attack is still very effective with two trigger-target pairs.}
\begin{tabular}{c|c|c|c}
\hline
& FID & MSE for first target & MSE for second target \\ \hline
Clean model &  12.80 & - & - \\
Backdoored model & 13.77 & 4.40e-3 & 2.33e-6 \\ \hline
\end{tabular}
\label{tab:multiple}
\end{table}


\section{Experiments on different samplers}
\label{sec:samplers}
We also conducted experiments on different samplers, DPMSolver~\cite{lu2022dpm} to show that the proposed loss in unconditional generation can be directly applied to different commonly used samplers. Previous work~\cite{chou2023backdoor} only consider DDPM sampling and the trained backdoored diffusion models cannot be used for other samplers. Figure~\ref{fig:ddim_prev} shows the sampling results with previous work's backdoor models where the left one is triggered inputs and the right one is sampling results, indicating the backdoor is ineffective when other samplers are used. In our proposed framework, however, different commonly used samplers can be used. We use second-order DPMSolver to test the backdoor performance. As shown in Figure~\ref{fig:visualize_dpm} and Table~\ref{tab:visualize_dpm}, the injected backdoor is still very effective.

\begin{table}[ht]
\centering
\caption{FID and MSE results when applying DPMSolver sampler, which indicate the backdoor is still effective under different samplers.}
\begin{tabular}{c|c|c}
\hline
& FID & MSE \\
\hline
Clean model & 12.80 & - \\
Backdoored model & 9.50 & 3.10e-3 \\
\hline
\end{tabular}
\label{tab:visualize_dpm}
\end{table}

\begin{figure}[h]
  \centering
  \begin{minipage}{0.3\textwidth}
    \centering
    \includegraphics[width=\linewidth]{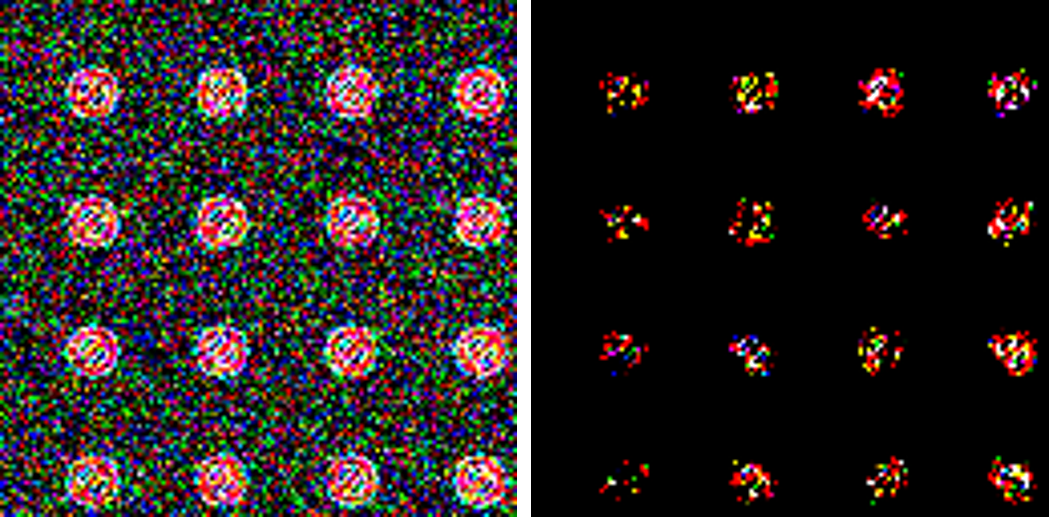} 
    \caption{Illustration of previous backdoor~\cite{chou2023backdoor} for DDIM sampler on CIFAR10. The left figure is the initial noise with the visible triggered and the right figure is the generated output from sampling.}
    \label{fig:ddim_prev}
  \end{minipage}
  \hfill
  \begin{minipage}{0.3\textwidth}
    \centering
    \includegraphics[width=\linewidth]{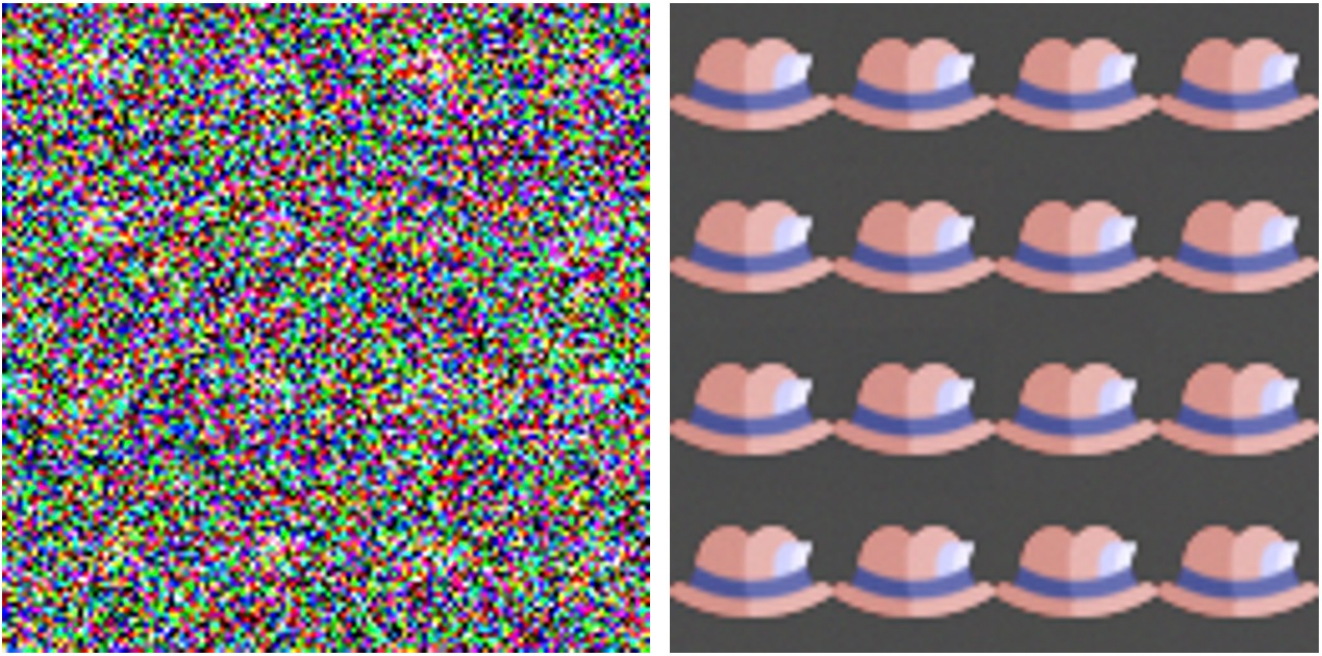} 
    \caption{Visualization results of DPMSolver sampler on CIFAR10.}
    \label{fig:visualize_dpm}
  \end{minipage}
  \hfill
  \begin{minipage}{0.3\textwidth}
    \centering
    \includegraphics[width=\linewidth]{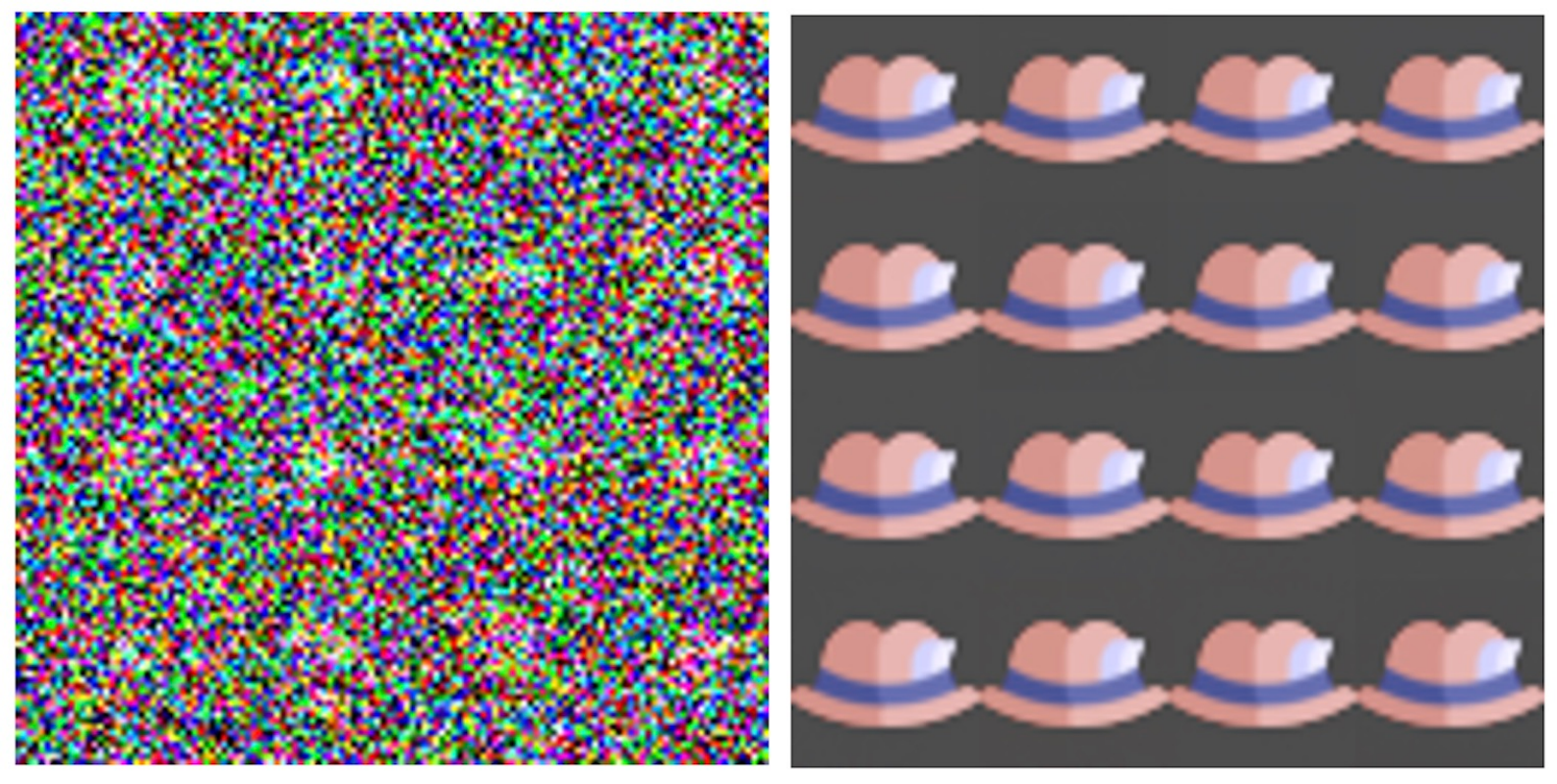} 
    \vspace{-1em}
    \caption{Visualization results w/ clip operation for backdoor sampling.}
    \label{fig:visualize_clip}
  \end{minipage}
\end{figure}


\section{Multiple input-aware trigger-target pairs}
\label{sec:multiple_condition}
Recall that for the conditional case, the inputs to the trigger generator are masked image, mask, and target image. Hence if we use different target images, can we insert multiple targets simultaneously? Here we show it is possible to insert multiple targets during training. Specifically, we use two target images (`Hat' and `Shoe') to train the trigger generator to learn input-aware invisible triggers based on masked image and target image. The results are shown in Table~\ref{tab:condition_multiple}. 

\begin{table}[htbp]
\centering
\caption{Results on inserting two target images simultaneously, indicating the possibility of a even stronger attack with different targets.}
\begin{tabular}{c|c|c|c|c}
\hline
& FID & LPIPS & MSE for first target & MSE for second target \\ \hline
Clean model &  1.00 & 0.064 & - & - \\
Backdoored model & 1.02 & 0.063 & 1.44e-3 & 2.07e-3 \\ \hline
\end{tabular}
\label{tab:condition_multiple}
\vspace{-1.5em}
\end{table}
\section{Defense against backdoored diffusion models}
\label{sec:clip}
In this section, we test various defense methods against the proposed framwork. As a baseline method, we first test if finetuning with clean data can mitigate the backdoors. Specifically, we finetune a backdoored model (trained on CIFAR10 with norm bound 0.1) with all clean training data of CIFAR10 for five epochs. Then we sample 10K backdoored images to compute the MSE with the target image. The average MSE is 7.03e-3, similar to the results in Table~\ref{tab:visualize_universal}, indicating that the backdoor is still effective and even finetuning with all clean training data cannot remove the inserted backdoor.


Then we show that both ANP~\cite{wu2021adversarial} and inference-time clipping mentioned in previous work~\cite{chou2023backdoor} become totally ineffective in our proposed framework. We first present defense results on ANP against a backdoored diffusion model trained on CIFAR10 with norm bound 0.2 and poison rate 0.1. Following the settings in~\cite{chou2023backdoor}, we use the largest perturbation budget (budget=4, larger budge means better Trojan detection) in~\cite{chou2023backdoor} and train the perturbated model with the whole clean dataset for 5 epochs. With different learning rates($1e-4, 2e-4$), we found ANP performs even worse on our proposed attack, compared to the performance on the attack in~\cite{chou2023backdoor}. The perturbated model immediately collapses to a meaningless image or a black image. The visualization results with different learning rates during the training are shown in Figure~\ref{fig:anp1} and Figure~\ref{fig:anp2}. This can also observed from the MSE results between the reversed target image by ANP and the true target image (`Hat' in our experiments), as shown in Table~\ref{tab:anp1}. We sample 2048 images to compute the MSE, same as~\cite{chou2023backdoor}. As shown in the tables, the computed MSE values are large, indicating that ANP cannot reconstruct the target image at all. We then demonstrate the defense results of inference-time clipping. With the clip operation in~\cite{chou2023backdoor}, we sample images with DDIM~\cite{song2020denoising} sampling with different poison rates. As shown in Figure~\ref{fig:visualize_clip} and Table~\ref{tab:visualize_clip}, with clipping, backdoored models can still achieve high-utility and high-specificity, which indicates the defense method is not a good choice for these cases.

\begin{wrapfigure}{r}{0.45\textwidth}
  \begin{center}
    \includegraphics[width=0.4\textwidth]{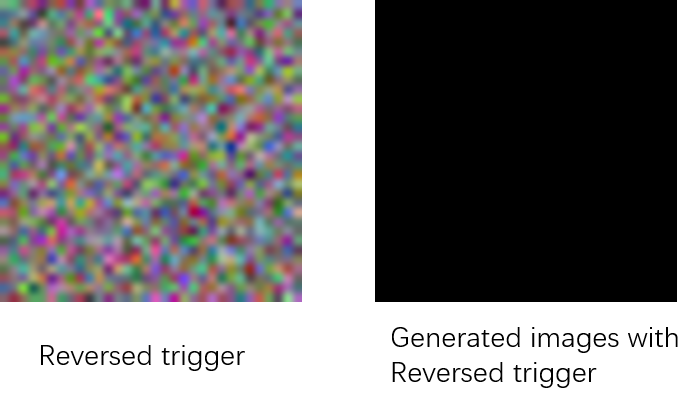}
  \end{center}
  \vspace{-1em}
  \caption{Illustration for the reversed trigger by Elijah~\cite{an2024elijah} and the generated image given input with the reversed trigger, indicating Elijah~\cite{an2024elijah} is not effective in our proposed framework.}
  \label{fig:reverse}
  \vspace{-1em}
\end{wrapfigure}
Moreover, we also test the recently proposed defense method Elijah~\cite{an2024elijah} which is specifically designed for backdoors in diffusion models. We test Elijah in the unconditional case, on a backdoored model trained on CIFAR10 with norm bound 0.2. As a reverse-engineering based method, if Elijah cannot reverse the trigger effectively and generate the similar target image with the reversed trigger, then it cannot defend the backdoored model effectively. Our experiments show that Elijah can only reverse an incorrect and noisy trigger on the backdoored model. More importantly, when the reversed trigger is added onto the input, the model totally collapses and can only generate black images whereas the true target image is the ‘Hat’ image, which means that Elijah totally fails on the attacks. An illustration for the reversed trigger and generated image is shown in Figure~\ref{fig:reverse}. Furthermore, please note Elijah cannot be applied to our proposed attack for the conditional case. The reason is that Elijah reverses and detects the backdoors by the distribution shift in the input noise. However, in our proposed attack for the conditional case, there is no such distribution shift in the input noise of clean models and backdoored models. In this case, the optimization loss for the reverse of the trigger will directly become 0. Hence Elijah naturally cannot be used to defend our proposed attack for the conditional generation.

To summarize, as shown above, our proposed framework is robust to various strong backdoor mitigation methods, demonstrating the stealthiness and effectiveness.

\begin{figure}[ht]
  \centering
  \begin{minipage}{0.45\textwidth}
    \centering
    \includegraphics[width=\linewidth]{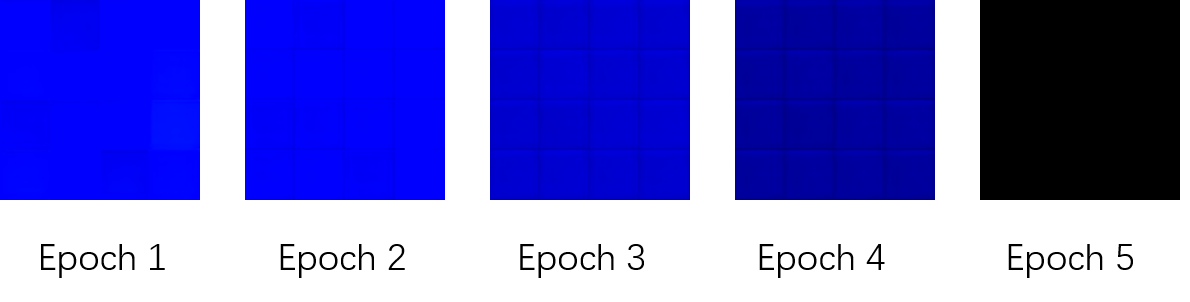} 
    \vspace{-1em}
    \caption{Reversed target images by ANP with learning rate $1e-4$.}
    \label{fig:anp1}
  \end{minipage}
  \hfill
  \begin{minipage}{0.45\textwidth}
    \centering
    \includegraphics[width=\linewidth]{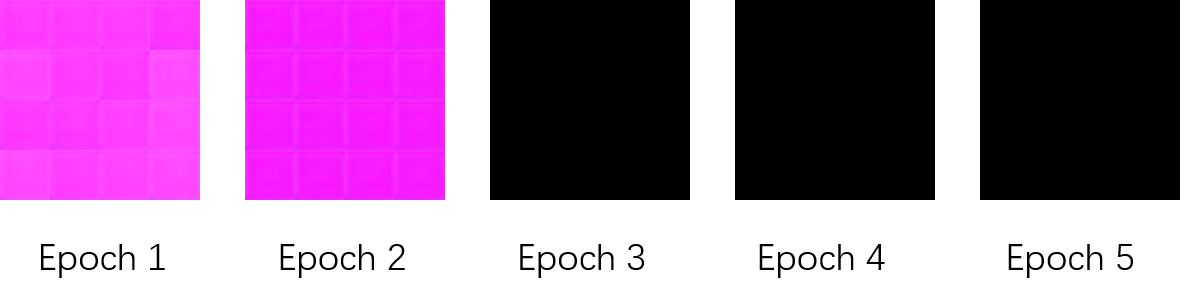} 
    \vspace{-1em}
    \caption{Reversed target images by ANP with learning rate $2e-4$.}
    \label{fig:anp2}
  \end{minipage}
\end{figure}
\vspace{-1em}
\begin{table}[ht]
\centering
\caption{MSE between reversed target images and the true target image. Learning rate: $1e-4, 2e-4$.}
\begin{tabular}{c|c|c|c|c|c}
\hline
LR & Epoch 1 & Epoch 2 & Epoch 3 & Epoch 4 & Epoch 5 \\
\hline
$1e-4$ & 0.28 & 0.28 & 0.22 & 0.19 & 0.24 \\
\hline
$2e-4$ & 0.24 & 0.26 & 0.24 & 0.24 & 0.24 \\
\hline
\end{tabular}
\label{tab:anp1}
\end{table}



\begin{table}[H]
    \begin{minipage}{0.45\textwidth}
    \centering    
      \caption{Quantitative results w/ and w/o clip operation on CIFAR10.}
        \scalebox{0.84}{\begin{tabular}{c|c|c|c|c}
\hline
& \multicolumn{2}{c|}{w/o clip} & \multicolumn{2}{c}{w/ clip} \\
\hline
Poison rate & FID & MSE & FID & MSE \\
\hline
0.05 & 11.76 & 3.07e-3 & 11.76 & 3.49e-3 \\ \hline
0.3 & 14.98 & 6.36e-5 & 14.98 & 8.59e-5 \\ \hline
0.4 & 15.88 & 2.29e-6 & 15.88 & 2.29e-6 \\ \hline
0.5 & 18.31 & 2.29e-6 & 18.33 & 2.29e-6 \\ 
\hline
\end{tabular}}
\label{tab:visualize_clip}
    \end{minipage}\hfill%
    \begin{minipage}{0.45\textwidth}
      \centering
        \caption{Results on finetuning pre-trained models with different poison rates.}
        \scalebox{0.76}{\begin{tabular}{c|c|c|c|c}
\hline
& \multicolumn{2}{c|}{Poison rate=0.1} & \multicolumn{2}{c}{Poison rate=0.5} \\ \hline
Finetuning epochs & FID & MSE & FID & MSE \\ \hline
30 & 8.22 & 3e-5 & 8.55 & 3.14e-6 \\ \hline
100 & 6.40 & 6.12e-6 & 6.20 & 2.34e-6 \\ \hline
    \end{tabular}}
\label{tab:finetune}
    \end{minipage} 
\end{table}

\section{Results on finetuning pretrained models}
\label{sec:finetune}
Here, we showcase the effectiveness of our proposed framework by fine-tuning pre-trained models for varying numbers of epochs. The results are presented in Table~\ref{tab:finetune}. It is worth noting that we can successfully introduce a backdoor into the model by fine-tuning it for as few as 30 epochs, yet still achieve a lower FID compared to the clean model. This means the proposed framework can easily be applied in practice.

\end{document}